\documentclass[11pt]{article}
\usepackage{style}

\usepackage{amsthm ,amsmath, amssymb, natbib, graphicx, url, algorithm2e}

\usepackage{times}

\author{
Yukti Makhija$^{\dagger}$ \enspace
Rishi Saket$^{\dagger}$  \vspace{1mm} \\
Google Research \\
\texttt{\{yuktimakhija,rishisaket\}@google.com}
}
\date{}
\usepackage{booktabs}       %
\usepackage{amsfonts}       %
\usepackage{nicefrac}       %
\usepackage{microtype}      %
\usepackage{mathtools,amsfonts,amssymb,mdframed,xspace,xfrac,bm}
\usepackage{makecell}
\usepackage{amssymb,amsopn,mathtools}
\usepackage{mathrsfs}
\usepackage{complexity}
\usepackage{enumitem}
\usepackage{subcaption}
\usepackage{framed}

\usepackage{diagbox}
\usepackage{multirow}
\usepackage{booktabs}

\newtheorem{theorem}{Theorem}[section]

\newtheorem{lemma}[theorem]{Lemma}

\newcommand{\eps}{\varepsilon}

\renewcommand{\E}{\mathbb{E}}

\newcommand{\pos}{{\sf{pos}}}
\renewcommand{\R}{\mathbb{R}}

\newcommand{\mc}[1]{\ensuremath{\mathcal{#1}}}

\newcommand{\tn}[1]{\ensuremath{\textnormal{#1}}}
\newcommand{\ol}[1]{\ensuremath{\overline{#1}}}

\newcommand{\bu}{{\mathbf{u}}}
\newcommand{\bv}{{\mathbf{v}}}

\newcommand{\br}{{\mathbf{r}}}

\title{Weak to Strong Learning from Aggregate Labels}

\begin{document}
\def\thefootnote{$\dagger$}\footnotetext{currently at Google DeepMind.} 
\maketitle

\begin{abstract}%
  In learning from aggregate labels, the training data consists of sets or ``bags'' of feature-vectors (instances) along with an aggregate label for each bag derived from the (usually $\{0,1\}$-valued) labels of its constituent instances. In the \textit{learning from label proportions} (LLP) setting, the aggregate label of a bag is the average of the instance labels, whereas in \textit{multiple instance learning} (MIL) it is the OR. The goal is to train an instance-level predictor, which is typically achieved by fitting a model on the training data, in particular one that maximizes the accuracy which is the fraction of \textit{satisfied} bags i.e., those on which the model's induced labels are consistent with the target aggregate label. A weak learner in this context is one which has at a constant accuracy $ < 1$ on the training bags, while a strong learner's accuracy can be arbitrarily close to $1$. We study the problem of using a weak learner on such training bags with aggregate labels to obtain a strong learner, analogous to supervised learning for which boosting algorithms are known. Our first result shows the impossibility of boosting in the LLP setting using weak classifiers of any accuracy $< 1$ by constructing a collection of bags for which such weak learners (for any weight assignment) exist, while not admitting any strong learner. A variant of this construction also rules out boosting in MIL for a non-trivial range of weak learner accuracy. \\
  In the LLP setting however, we show that a weak learner (with small accuracy) on large enough bags can in fact be used to obtain a strong learner for small bags, in polynomial time. We also provide more efficient, sampling based variant of our procedure with probabilistic guarantees which are empirically validated on three real and two synthetic datasets. Our work is the first to theoretically study weak to strong learning from aggregate labels, with an algorithm to achieve the same for LLP, while proving the impossibility of boosting for both LLP and MIL.
\end{abstract}

\section{Introduction} \label{sec:intro}

In traditional, fully supervised learning, the training data consists of a collection of labeled feature-vectors (i.e., training examples) $\{(\bx_i \in \bm{\mc{X}}, y_i = y(\bx_i))\}_{i=1}^n$, for some domain $\bm{\mc{X}}$ where the mapping $y$ provides the feature-vector labels. In this paper we will consider the binary setting i.e., the labels are $\{0,1\}$-valued. %
The usual training goal is to find a good classifier $f : \bm{\mc{X}} \to \{0,1\}$ which maximizes the training accuracy $\left|\{i : f(\bx_i) = y_i\}\right|/n$. 
In recent times however, due to privacy~\citep{R10} or feasibility~\citep{CHR} constraints, in many applications the training label for each  training example is not available. Instead, the training data consists of sets or \emph{bags} of feature-vectors along with only the \emph{average} or equivalently \emph{sum} of the labels for each bag since bag size is known. This is called \emph{learning from label proportions} (LLP) in which the training set consists of labeled bags $\{(B_j, \ol{y}_j\}_{j=1}^m$ where  $B_j \subseteq \bm{\mc{X}}$ and $\ol{y}_j = \sum_{\bx \in B_j}y(\bx)$. The training goal is to fit a good classifier $f: \bm{\mc{X}} \to \{0,1\}$ on this bag-level training data. A related problem is \emph{multiple instance learning} (MIL) in which the label for each bag is the {\sf OR} of the boolean labels of its constituent feature vectors, while the goal of fitting a good feature-vector classifier remains the same. %
A natural metric for the goodness of fit in the LLP setting is to maximize the bag-level accuracy i.e., the fraction of \textit{satisfied} training bags, where a bag $(B, \ol{y})$ is satisfied if $\ol{y} = \left(\sum_{\bx \in B}f(\bx)\right)$. An analogous notion of accuracy for MIL is if $\ol{y} = \left(\bigvee_{\bx \in B}f(\bx)\right)$. Recent works~\citep{Saket21,Saket22} have studied the the computational learning aspect of LLP and MIL, and in particular showed that the problem of finding classifiers (even in the realizable case) of high bag-level accuracy can be NP-hard.

In supervised classification, \emph{boosting} (see \citep{AdaBoost,FSBook}) is a well known meta-technique which, given a training dataset uses an ensemble (typically a majority) of  \textit{weak} classifiers (on reweighed data) to output a hypothesis which has accuracy arbitrarily close to $1$ i.e., a \textit{strong} classifier. In the $\{0,1\}$-labels case a weak classifier has accuracy at least $(1/2 + \eps)$ for some $\eps > 0$, while that for a strong classifier is $(1 - \nu)$ where $\nu$ can be made arbitrarily small. Note that the threshold of $1/2$ for weak classification is the expected accuracy of random prediction on the training set.

To address the algorithmic learning problems in LLP and MIL, one could hope to apply boosting techniques to LLP and MIL settings as well. Here, we can define a weak classifier having some constant accuracy on the bags, while the notion of a strong classifier remains the same: that with an arbitrarily high accuracy. A natural question to ask is:

\textit{is there a way to do boosting using weak-classifiers to obtain a strong classifier in learning from aggregate labels?}

In this work we show that the above is \textit{impossible} even on $2$-sized bags for (i) LLP using weak classifiers of any accuracy $ < 1$, and (ii) for MIL using weak classifiers of any accuracy $< 2/3$. Specifically, we construct a collection of bags such that any reweighing of the bags admits a weak classifier of the desired accuracy while the original collection does not have admit \emph{any} strong classifier i.e., any labeling to the underlying feature vectors satisfies at most some constant $< 1$ fraction of the bags. We note that on bags of size $2$, for both LLP and MIL the worst-case accuracy obtained by using the random or any constant-valued classifier (all $0$s or all $1$s), is $1/2$. So, even for MIL we rule out boosting using weak classifiers with non-trivial accuracy in $[1/2, 2/3)$.  Our impossibility of boosting stands in contrast to previous work (e.g. \citep{MILBoost,AdaBoostLLP}) which empirically evaluate boosting heuristics for LLP and MIL -- our results are the first to show that such algorithms cannot provably yield a strong classifier.

While the above impossibility results are applicable to the boosting framework, one can ask:

\emph{is there some other way to derive a strong classifier from weak classifiers?}

Our next result answers this question in the affirmative for LLP: a weak classifier (of any constant accuracy $\gamma > 0$) on large bags can be used to derive a strong classifier on a training set of (smaller) bags. These large bags are each a union of $t$ training bags, where $t$ depends only on $\gamma$ and the desired accuracy of the strong classifier. While on $m$ training bags, the number of ($\approx m^t$) unions are polynomial-time for constant $t$, we also provide a significantly more efficient sampling version of this approach which provides the same guarantees with high probability. These are to the best of our knowledge the first methods obtaining strong classifiers from weak classifiers for LLP. For MIL on the other hand the question of such weak to strong learning remains open.

\subsection{Previous Related Work}
{\bf Multiple Instance Learning (MIL).} The study by \citet{DLL97} introduced MIL for drug activity detection, where the bag label is modeled as an {\sf OR} of its (unknown) instance labels, all labels are $\{0,1\}$-valued. The goal, given such a dataset of bags, is to train a classifier for instance labels. Theoretically, \citet{blum1998note} proved that noise tolerant PAC learnability implies MIL PAC learnability for iid bags, and generalization bounds for the classification error on bags were provided by \citet{ST12}. %
Methods including logistic regression, maximum likelihood and boosting with differentiable approximations to the {\sf OR} function~\citep{RC05, ramon2000multi,ZPV05} have been proposed. Diverse-density (DD) method~\citep{ML97} and its EM-based variant, EM-DD~\citep{ZG01} are specialised MIL techniques.
 Over the years this approach has found many applications in numerous areas, including drug discovery~\citep{ML97}, analysis of videos ~\citep{SDB13}, medical images~\citep{WYY15}, time series ~\citep{M98} and information retrieval~\citep{LY00}.

\noindent
 {\bf Learning from Label Proportions (LLP).} A variety of specialized LLP methods have been introduced till date: \citet{FK05} and \citet{HIL13} developed  MCMC techniques, \citet{MCO07} adapted traditional supervised learning techniques like $k$-NN and SVM, while clustering based methods were proposed by \citet{CLQZ09} and \citet{SM11}. Further, \citet{QSCL09} and \citet{PNCR14} devised specialized learning algorithms using bag-label mean estimates, and \citet{YLKJC13} developed an SVM approach with bag-level constraints. %
 Newer methods involve deep learning~\citep{KDFS15,DZCBV19,LWQTS19,NSJCRR22} and others leverage characteristics of the distribution of bags~\citep{SRR,ZWS22,chen2023learning,busafekete2023easy}. 
 The theoretical foundations of LLP were investigated by \citet{YCKJC14}, who defined the problem within the PAC framework and established bounds on the generalization error for the label proportion regression task. Recent work by \citet{Saket21}, \citet{Saket22} and \citet{brahmbhatt2023pac} addressed bag-classification using linear classifiers, providing algorithmic and hardness bounds. %
 Applications of LLP include privacy in online advertising~\citep{Obrien}, high energy physics~\citep{DNRS} and IVF predictions~\citep{hernandez2018}.

\noindent
{\bf Boosting.} The first boosting algorithm was given by \citet{Schapire} which was followed by a more efficient algorithm by \citet{Freund} %
and subsequently the famous AdaBoost algorithm~\citep{AdaBoost}. %
Further work \citep{xgboost, ent_lp_boost, brown_boost_Freund2001} resulted in the development of several boosting techniques, while \citet{AnyBoost} showed that several boosting algorithms (including AdaBoost~\citep{AdaBoost} and LogitBoost~\citep{LogitBoost}) implicitly perform gradient descent in the functional space and fall into the AnyBoost framework.

If we consider bags themselves as examples, one can directly apply existing boosting frameworks to obtain strong bag-level classifiers (see for e.g. \citep{two_view_llp_boost_Lai2023}). However, our goal is to obtain feature-vector level strong classifiers with high accuracy on bags. Previous works have adapted a subset of the above mentioned boosting approaches to LLP~\citep{ViolaPZ05,MILBoost,AdaBoostLLP} -- however they are empirically evaluated heuristics and not guaranteed to output strong classifiers.

\subsection{Problem Definition and Our Results} \label{sec:problemdefnandourresults}
Let $\bm{\mc{X}} \subseteq \R^d$ for some $d \in \mathbb{Z}^+$ be the space of feature-vectors, while a \emph{bag} $B$ is a finite subset of $\bm{\mc{X}}$. Let $\mc{Y} \subseteq \R$ be the space of feature-vector labels, and $\ol{\mc{Y}} \subseteq \R$ be the space of bag-level aggregate labels with some aggregation function ${\sf Agg}$ mapping finite $\mc{Y}$-valued tuples to $\ol{\mc{Y}}$. We say that a bag  $B = (\bx_1, \dots, \bx_q)$ with aggregate label $\sigma$ is \emph{satisfied} by a classifier $f : \bm{\mc{X}} \to \mc{Y}$ if ${\sf Agg}(f(\bx_1), \dots, f(\bx_q)) = \sigma$.  For convenience we will use bag to refer to a bag and its aggregate-label.

An $m$-sized \emph{training set} $\mc{B}$ is a collection $\{(B_j, \sigma_j) \in 2^{\bm{\mc X}} \times \ol{\mc{Y}}\}_{j=1}^m$ of $m$ bags and their aggregate-labels along with weights $w_j \geq 0$ for bag $B_j$ ($j=1,\dots, m$) such that $\sum_{j=1}^mw_j = 1$. 
The \textit{accuracy} of  a classifier on $\mc{B}$ is the weighted fraction of bags satisfied by it. We define a \emph{weak} classifier to be one with constant accuracy $\gamma > 0$, and a $\nu$-\emph{strong} classifier to have an accuracy $(1 - \nu)$. For ease of exposition we call the latter a strong classifier when $\nu$ can be taken to be an arbitrarily small positive constant.

For this study, the underlying feature-vector level task is binary classification, so $\mc{Y} = \{0,1\}$. 
For multiple instance learning (MIL) the aggregation function is ${\sf OR}$ i.e., the boolean disjunction and therefore $\ol{\mc{Y}} = \{0,1\}$. On the other hand, in learning from label proportions (LLP) we take the aggregation function to be ${\sf SUM}$ i.e., the real sum of labels, and therefore $\ol{\mc Y} = \Z^{\geq 0}$. Note that for LLP we could have equivalently taken average as the aggregation (since the size of any bag is known), however for convenience we use ${\sf SUM}$.

We also define the ${\sf Trv}_{\sf LLP}(\mc{B})$ for a collection of LLP bags, to denote the trivial accuracy threshold on $\mc{B}$. Specifically, it is the minimum weighted accuracy given by the best among the random classifier and the two constant valued classifiers (all $0$s and all $1$s classifiers), over all possible weight assignments to the bags $\mc{B}$. For a collection of MIL bags $\mc{B}$, ${\sf Trv}_{\sf MIL}(\mc{B})$ is defined analogously. 

We shall also use the \emph{halfspace} classifier whose value at point $\bx \in \R^d$ is given by ${\sf pos}\left(\langle \br, \bx \rangle + c\right)$ for some $\br \in \R^d$, $c \in \R$ where $\pos(a) = 1$ if $a > 0$ and $0$ otherwise. We say that the halfspace passes through the origin i.e., is \textit{homogeneous} if $c = 0$. Next we state this paper's results.

\subsubsection{Our Results} 
We begin with the impossibility results for boosting in the LLP (Theorem \ref{thm:LLP-imposssibility}) and MIL (Theorem \ref{thm:MIL-imposssibility}) settings. These theorems coupled with the definition of the boosting meta algorithm (Section \ref{sec:preliminaries_boosting}) imply our impossibility results.
\begin{theorem}[Impossibility of boosting in LLP]\label{thm:LLP-imposssibility}
    Let $\alpha \in [1/2, 1)$ be any constant. Then, for any arbitrarily small constant $\eps > 0$ there exists $d, m \in \Z^+$ and a collection of bags $\mc{B} = \{B_j \subseteq \R^d\}_{j=1}^m$ where $|B_j| = 2$ and the aggregate label (i.e. sum of labels in LLP setting) of $B_j$ is $1$ ($j = 1,\dots, m$) and the following properties are satisfied: 
    
\noindent
    \tn{(Existence of weak halfspace classifiers):} For any assignment of weights $w_j$ to $B_j$ ($j = 1,\dots, m$) such that $\sum_{j=1}^m w_j = 1$, for the weighted collection of bags there is a halfspace classifier with accuracy $\alpha$.
 
\noindent   
    \tn{(No Strong Classifier):} For the unweighted set of bags $\{B_j \subseteq \R^d\}_{j=1}^m$ there is no classifier $f : \cup_{j=1}^m B_j \to \{0,1\}$ having accuracy greater than $\alpha + \eps$.
\end{theorem}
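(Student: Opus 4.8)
The plan is to recast the whole statement in graph-theoretic language and then handle the two bullet points by, respectively, a minimax argument and a MAX-CUT bound. Since every bag $B_j$ has size $2$ and aggregate (sum) label $1$, a classifier $f$ satisfies $B_j=\{\bx,\bx'\}$ iff exactly one of $f(\bx),f(\bx')$ equals $1$, i.e. iff $f$ assigns \emph{different} labels to the two points. Viewing the distinct feature-vectors as vertices and each bag as an edge, a classifier is exactly a $2$-colouring of the vertices, a bag is satisfied iff its edge is \emph{cut}, and the accuracy of $f$ is exactly the fraction of edges it cuts. Hence the No-Strong-Classifier property is equivalent to the purely combinatorial statement that the graph $G=(\cup_j B_j,\{B_j\})$ satisfies $\mathrm{MAXCUT}(G)\le\alpha+\eps$, while the weak-classifier property concerns only cuts induced by halfspaces. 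Note the edges must genuinely overlap: if all bags were vertex-disjoint the max-cut would be $1$, so the construction is forced to produce a graph with many shared vertices yet small max-cut.

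For the weak halfspace classifiers I would first remove the ``for every weighting'' quantifier via von~Neumann's minimax theorem. Because $n:=|\cup_j B_j|$ points in $\R^d$ induce only finitely many (at most $O(n^{d})$) distinct halfspace sign-patterns, the weak-classifier game is finite, so ``for all weightings $w$ there is a halfspace of weighted accuracy $\ge\alpha$'' is equivalent to the existence of a \emph{single} distribution $\mu$ over halfspaces with $\Pr_{h\sim\mu}[h\text{ cuts }e]\ge\alpha$ for every edge $e$. I would realize such a $\mu$ geometrically: place the vertices on the unit sphere and let $\mu$ be a uniformly random homogeneous halfspace. By the Goemans--Williamson identity a pair at angle $\theta$ is separated with probability exactly $\theta/\pi$, so it suffices to embed $G$ with every edge subtending angle at least $\alpha\pi$; then each edge is cut with probability $\ge\alpha$ and the weak-classifier property follows.

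The content of the theorem is thus to exhibit a graph \emph{and} a spherical embedding such that (i) every edge has angle $\ge\alpha\pi$ (giving the weak learners) and simultaneously (ii) $\mathrm{MAXCUT}(G)\le\alpha+\eps$ (ruling out a strong learner). Two anchor points already show the mechanism. For $\alpha=1/2$ take $G=K_n$ on the orthonormal basis $e_1,\dots,e_n\in\R^n$: every edge has angle exactly $\pi/2$, so is cut with probability $1/2$, while $\mathrm{MAXCUT}(K_n)\le\tfrac12+\tfrac{1}{2(n-1)}\le\tfrac12+\eps$ once $n>1+1/(2\eps)$. For $\alpha=2/3$ an equilateral triangle (pairwise angle $2\pi/3$) has every edge cut with probability $2/3$ and max-cut exactly $2/3$, and many disjoint copies keep both quantities at $2/3$. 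For general $\alpha\in[1/2,1)$ I would interpolate, building $G$ from gadgets (odd cycles, or small cliques, embedded so each edge subtends angle $\alpha\pi$) and tuning their lengths and multiplicities --- equivalently, edge weights realized by repetition --- so that the resulting (weighted) max-cut lands in $[\alpha,\alpha+\eps]$; the random-hyperplane cut already certifies the matching lower bound $\mathrm{MAXCUT}(G)\ge\alpha$, so only the upper bound is at issue.

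The main obstacle is exactly that upper bound in (ii): certifying $\mathrm{MAXCUT}(G)\le\alpha+\eps$ while keeping all edge-angles $\ge\alpha\pi$. The two requirements pull against each other, and the only ``free'' upper bound, the spectral/SDP bound, gives $\mathrm{MAXCUT}(G)\le\tfrac{1-\cos\alpha\pi}{2}$, which is strictly larger than $\alpha$ for every $\alpha>1/2$; it therefore cannot reach $\alpha+\eps$, and one must instead exploit the discrete odd-cycle / clique combinatorics of the specific gadget to bound the \emph{integral} max-cut. Meanwhile small max-cut favours short odd structures and small cliques, whereas the angle condition $\theta\ge\alpha\pi$ (near $\pi$ when $\alpha\to1$) favours long, spread-out configurations. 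Reconciling these two demands with one explicit family of gadgets, and then choosing $d$, $m$, the gadget sizes and their multiplicities as functions of $\alpha$ and $\eps$, is the crux of the argument.
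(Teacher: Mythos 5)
Your reduction to Max-Cut and your treatment of the weak-learner property are correct and coincide with the paper's proof: embed the vertices on a unit sphere so that every edge subtends angle at least $\alpha\pi$, note that a uniformly random homogeneous halfspace ${\sf pos}(\langle \br, \cdot\rangle)$ separates a pair at angle $\theta$ with probability $\theta/\pi$, and average over the weighting --- for any weights the expected weighted accuracy of the random halfspace is at least $\alpha$, so some fixed halfspace attains it. (The minimax theorem is overkill here; only this easy averaging direction is ever used, and it is exactly what the paper does.)

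The genuine gap is the \tn{(No Strong Classifier)} half, which you yourself label ``the crux'' and then do not prove. You exhibit working constructions only at special values: $\alpha = 1/2$ (orthonormal $K_n$), $\alpha = 2/3$ (equilateral triangle), and implicitly $\alpha = 2k/(2k+1)$ (star-polygon odd cycles); for general $\alpha \in [1/2,1)$ you propose to ``interpolate'' with gadgets, and this strategy fails. First, a disjoint union of gadgets with \emph{different} angle guarantees cannot satisfy the weak-learner property at level $\alpha$, because the adversarial weighting may put all its mass on the component with the weaker guarantee (e.g.\ on the near-orthogonal $K_n$ component, whose max-cut, hence best achievable accuracy under any classifier, is $1/2 + o(1) < \alpha$); so all edges of the construction must individually subtend angle at least $\alpha\pi$. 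Second, under that angle constraint your gadget families demonstrably cannot reach max-cut $\alpha + \eps$ for general $\alpha$: taking $\alpha = 0.6$, the requirement that all pairwise inner products be at most $\cos(0.6\pi) \approx -0.309$ forces any clique to have at most $4$ vertices, and $K_3$, $K_4$ and every odd cycle all have max-cut at least $2/3 > 0.6 + \eps$; no small gadget you name gets below this. The paper closes precisely this gap by importing the Feige--Schechtman Max-Cut integrality-gap construction: discretize $\mathbb{S}^{d-1}$ with $d = O((1/\eps)\log(1/\eps))$ into cells of tiny diameter, take one point per cell, and join pairs whose angle lies in the narrow band $[\theta, \theta+\eps]$ for $\theta = \alpha\pi$; their theorem --- proved by spherical isoperimetry and concentration of measure, not by gadget combinatorics, and also not by the SDP bound, which, as you correctly observe, is too weak --- states that no $\{0,1\}$-labeling cuts more than $\theta/\pi + O(\eps^2) = \alpha + O(\eps^2)$ of the edges. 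Note also that the band's upper limit $\theta + \eps$ matters for this upper bound (connecting \emph{all} pairs at angle $\geq \theta$ would add near-antipodal, essentially bipartite structure that a hemisphere cut separates almost entirely), so even the form of the construction is more delicate than ``every edge at angle $\geq \alpha\pi$.'' Without this ingredient, or an equivalent one valid for every $\alpha \in [1/2,1)$, your argument does not yield the theorem.
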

The above theorem, proved in Section \ref{sec:impossibility_llp}, is optimal from multiple perspectives: firstly the bags are of size at most $2$ whereas when bags are all of size $1$ (i.e., supervised learning) boosting is indeed possible, showing that as soon as we transition from the fully supervised to the LLP setting in terms of bag size, boosting becomes impossible. Secondly, the result shows that even if weak learners of \emph{any} constant accuracy in $[1/2, 1)$ exist, there is no classifier with even a slightly greater accuracy, thus ruling out any non-trivial advantage of boosting, let alone obtaining a strong classifier. In Appendix \ref{app:trivial_performance} we give a simple argument showing that ${\sf Trv}_{\sf LLP}(\mc{B}) = 1/2$ for the bags $\mc{B}$ constructed in the above theorem.
We now state our result (proved in Section \ref{sec:impossibility_mil}) on the impossibility of boosting in the MIL setting.
\begin{theorem}[Impossibility of boosting in MIL]\label{thm:MIL-imposssibility}
    For any arbitrarily small constant $\eps > 0$ there exist $m \in \Z^+$ and a collection of bags $\mc{B} = \{B_j \subseteq \R^d\}_{j=1}^m$ along with the aggregate labels $\sigma_j$ for $B_j$ where $|B_j| = 2$ ($j = 1,\dots, m$) and the following properties are satisfied: 

\noindent
    \tn{(Existence of weak halfspace classifiers):} For any assignment of weights $w_j$ to $B_j$ ($j = 1,\dots, m$) such that $\sum_{j=1}^m w_j = 1$, for the weighted collection of bags there is a halfspace classifier with accuracy $2/3 - \eps$.

\noindent
    \tn{(No Strong Classifier):} For the unweighted set of bags $\{B_j \subseteq \R^d\}_{j=1}^m$ there is no classifier $f : \cup_{j=1}^m B_j \to \{0,1\}$ having accuracy greater than $3/4$.
\end{theorem}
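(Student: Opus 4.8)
The plan is to derive both properties from a single constant-size gadget, after first removing the universal quantifier over weightings.

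\textbf{Step 1 (reduction to one good mixture).} I would reduce the ``for every weighting'' requirement to the concrete task of exhibiting a \emph{single} distribution $p$ over halfspaces under which every bag is satisfied with probability at least $2/3-\eps$. Indeed, given such a $p$ and any weighting $w$, the $p$-averaged accuracy equals $\sum_j w_j \Pr_{h\sim p}[B_j \text{ satisfied}] \ge 2/3-\eps$, so some halfspace in the support of $p$ attains accuracy at least $2/3-\eps$ on $w$. This is the elementary direction of the minimax/LP-duality correspondence between weak learnability under all reweightings and the existence of a fixed good mixture, and it is all I need here.

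\textbf{Step 2 (the gadget).} I would place four instances in $\R^2$: points $a,c$ in a left cluster (say at first coordinate $-1$) and $b,d$ in a right cluster (first coordinate $+1$), with distinct second coordinates so that all four points are distinct and the two clusters are linearly separable. The four bags (all of size $2$) are the two label-$1$ ``cross'' bags $\{a,b\}$ and $\{c,d\}$, and the two label-$0$ ``within-cluster'' bags $\{a,c\}$ and $\{b,d\}$. Under {\sf OR}-aggregation a label-$1$ bag is satisfied iff at least one endpoint is labelled $1$, while a label-$0$ bag is satisfied iff both endpoints are labelled $0$.

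\textbf{Step 3 (No Strong Classifier).} This is a short combinatorial check over \emph{arbitrary} classifiers $f$: satisfying both label-$0$ bags forces $f(a)=f(c)=0$ and $f(b)=f(d)=0$, i.e. the all-zero labeling, which then falsifies both label-$1$ bags. Hence no $f$ satisfies all four bags; the best $f$ satisfies exactly three, so every classifier has accuracy at most $3/4$, establishing the No-Strong-Classifier property.

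\textbf{Step 4 (the weak mixture) and the main obstacle.} For the weak side I would take the mixture placing mass $1/3$ on each of: the halfspace $\pos(\langle\br,\bx\rangle)$ with $\br=(1,0)$ (labelling $b,d$ as $1$ and $a,c$ as $0$), its complement with $\br=(-1,0)$, and the constant-$0$ halfspace $\pos(\langle\br,\bx\rangle+c)$ with $c$ negative enough. A one-line verification shows each of the four bags is satisfied with probability exactly $2/3$ under this mixture, so by Step 1 every weighting admits a halfspace of accuracy $\ge 2/3\ge 2/3-\eps$ (in fact no $\eps$-slack is needed). I expect the genuine difficulty to be \emph{designing} the gadget, not verifying it: {\sf OR}-aggregation is asymmetric, since a label-$0$ bag pins \emph{both} of its instances to $0$ whereas a label-$1$ bag only needs a single $1$. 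Consequently the constant-$0$ classifier is the unique hypothesis that can help two label-$0$ bags simultaneously, and the mixture weights must be tuned so that pushing the label-$0$ bags above the trivial $1/2$ does not pull the label-$1$ bags below $2/3$, all while the shared-instance structure keeps the global optimum frustrated at $3/4<1$. Balancing these competing requirements is exactly what forces the particular four-bag pattern and the uniform $1/3$-mixture; once found, the gadget can be replicated or embedded to realize the stated $m$ and $d$.
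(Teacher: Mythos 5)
Your proof is correct, and it takes a genuinely different, more elementary route than the paper's. The paper builds a continuum of bags on the unit circle $\mathbb{S}^1$ (pairs of points at angle $\alpha\pi$ are $1$-bags, pairs at angle $(1-\alpha)\pi$ are $0$-bags), proves the weak-classifier property by a case analysis on the total weight $p$ of the $0$-bags (use the constant-$0$ classifier if $p \geq 2/3$, otherwise a random homogeneous halfspace whose expected accuracy is $2/3 - (1-\alpha)/2$), proves the $3/4$ bound by optimizing over the fraction $z$ of points labeled $1$, and finally discretizes $\mathbb{S}^1$ into $2T$ arcs with $\alpha = 1-\eps$ and $T = \lceil 4/\eps \rceil$, paying the $O(\eps)$ losses that appear in the statement. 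You instead exhibit a fixed four-point, four-bag gadget in $\R^2$ together with a uniform mixture over three halfspaces under which every bag is satisfied with probability exactly $2/3$; your Step 1 (the easy direction of minimax: one good mixture yields a good halfspace for every reweighting) replaces the paper's case analysis, and your Step 3 is a short check replacing the paper's $z$-optimization and the entire discretization step. Both constructions exploit the same tension you identify: $0$-bags reward the constant-$0$ classifier, $1$-bags punish it, and shared instances cap every single labeling at $3/4$. What your route buys: a finite construction from the start (no discretization error), a single gadget that works simultaneously for all $\eps$ with weak accuracy exactly $2/3$ rather than $2/3 - \eps$, and a much shorter verification; one can also check that ${\sf Trv}_{\sf MIL}$ of your gadget equals $1/2$ (constant and random classifiers never beat $1/2$ for the worst weighting), so it supports the same non-triviality discussion as Appendix \ref{app:trivial_performance}. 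What the paper's route buys is a large, structured family of bags parallel to its LLP construction, with a tunable angle parameter $\alpha$; but for the theorem as stated, your four-bag gadget suffices.
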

The above theorem shows that in the MIL setting, weak classifiers with any accuracy $< 2/3$ cannot be boosted to a strong classifier with accuracy $> 3/4$. As shown in 
 Appendix \ref{app:trivial_performance}, ${\sf Trv}_{\sf MIL}(\mc{B}) = 1/2$ for the bags $\mc{B}$ of the above theorem, and therefore our result applies to w non-trivial weak classifier accuracy in $(1/2, 2/3)$.

Next we state our results (proved in Section \ref{sec:weak_to_strong_alg}) in the LLP setting for obtaining a strong classifier on a collection of bags using a weak classifier on a derived collection of larger bags. In this case we consider unweighted collection of bags, since a weighted collection of $m$ bags can easily be converted into an unweighted collection of $Tm$ bags while preserving the accuracy of any classifier up to an additive error of $O(1/T)$ (see Appendix \ref{app:wtdtounwtd}). 
To state our result we assume that there is an oracle $\mc{O}_{q,\alpha}(\ol{\mc{B}})$ which given weighted collection of bags $\ol{\mc{B}}$ along with their aggregate labels, where each bag has size at most $q$, outputs a classifier $f$ with weighted accuracy $\alpha$ on $\ol{\mc{B}}$. 

\begin{theorem}[Weak to Strong LLP Learning]\label{thm:weaktostrong}
    For parameters $\alpha, \eps > 0$ there exists $t = O(1/(\eps\alpha^2))$, and algorithms $\mc{A}_1$ and $\mc{A}_2$ s.t. given an unweighted collection of $m$ bags $\mc{B}$, where $k = \max_{(B,\sigma)\in \mc{B}}\left|B\right|$ and $n := \left|\cup_{(B, \sigma)\in \mc{B}}B\right|$, and assuming that $\mc{O}_{kt,\alpha}$ exists,
    \begin{itemize}[leftmargin=1em]
   \item $\mc{A}_1$ creates a weighted collection  $\ol{\mc{B}}_1$ of at most $m^{t+1}$ bags each of size at most $kt$ such that $\mc{O}_{kt,\alpha}(\ol{\mc{B}}_1)$ outputs a classifier which has accuracy $(1-\eps)$ on $\mc{B}$.
    \item for any $\delta > 0$, $\mc{A}_2$ creates a random collection $\ol{\mc{B}}_2$ of $s = O\left(\frac{1}{\alpha}\left(n + \log\left(\frac{1}{\delta}\right)\right)\right)$ each of size at most $kt$ such that $\mc{O}_{kt, \alpha}(\ol{\mc{B}}_2)$ has accuracy $(1-\eps)$ on $\mc{B}$ with probability at least $(1-\delta)$. If $\mc{O}_{kt, \alpha}$ is guaranteed to output a classifier of VC dimension $r$ then $s = O\left(\frac{r}{\alpha}\log\left(\frac{n}{r}\right) + \log\left(\frac{1}{\delta}\right)\right)$ suffices.
    \end{itemize}
\end{theorem}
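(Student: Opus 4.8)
The plan is to exploit an amplification phenomenon: although a single LLP constraint on a small bag carries little information, the sum of the errors over a union of $t$ random bags vanishes with probability below $\alpha$ unless the classifier is already correct on all but an $\eps$-fraction of the bags, so a weak learner on unions is forced to behave like a strong learner on the originals. For a classifier $f$ and a bag $(B_j,\sigma_j)$ write $e_j(f) := \left(\sum_{\mathbf{x}\in B_j} f(\mathbf{x})\right) - \sigma_j$, so that $B_j$ is satisfied exactly when $e_j(f)=0$. Given $t$ bags, form the multiset union $U=B_{i_1}\uplus\cdots\uplus B_{i_t}$ with aggregate label $\sum_l\sigma_{i_l}$; it has size at most $kt$ and is satisfied by $f$ precisely when $\sum_{l=1}^t e_{i_l}(f)=0$. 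For $\mc{A}_1$ I would let $\ol{\mc{B}}_1$ be the collection of all $m^t\le m^{t+1}$ ordered $t$-tuples (with repetition) of bags of $\mc{B}$, each given uniform weight $m^{-t}$. Then the weighted union-accuracy of $f$ equals $\Pr[S_t=0]$, where $S_t=\sum_{l=1}^t e_{i_l}(f)$ is a sum of $t$ i.i.d.\ copies of the integer error $e(f)$ drawn under the uniform distribution on $\mc{B}$. Since the ground-truth labeling satisfies every original bag it satisfies every union, attaining union-accuracy $1\ge\alpha$, so the oracle's guarantee is non-vacuous. The whole argument reduces to the following anti-concentration claim, whose contrapositive is what we need: if $f$ fails on more than an $\eps$-fraction of $\mc{B}$, i.e.\ $q:=\Pr[e(f)\ne 0]\ge\eps$, then $\Pr[S_t=0]<\alpha$ for $t=\Theta(1/(\eps\alpha^2))$. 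Consequently any $f$ with union-accuracy $\ge\alpha$ --- in particular the output of $\mc{O}_{kt,\alpha}(\ol{\mc B}_1)$ --- has accuracy at least $1-\eps$ on $\mc{B}$.

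To prove the claim I would split on the modal mass $M:=\max_v \Pr[e(f)=v]$. If $1-M\ge\eps$, the Kolmogorov--Rogozin inequality bounds the concentration function of the i.i.d.\ sum by $\Pr[S_t=0]\le C/\sqrt{t(1-M)}\le C/\sqrt{\eps t}$, which is below $\alpha$ once $t\ge c/(\eps\alpha^2)$; this single estimate already covers both the ``barely failing'' case (mode at $0$ with mass $1-q\le 1-\eps$) and the symmetric ``maximally cancelling'' case. Otherwise some value $v_0$ carries mass $>1-\eps$; since $q\ge\eps$ this $v_0$ is nonzero, so the mean $\mathbb{E}[e(f)]$ is bounded away from $0$, $\mathbb{E}[S_t]$ drifts linearly in $t$, and a Hoeffding bound (errors lie in $[-k,k]$) gives $\Pr[S_t=0]\le 2e^{-\Omega(t/k^2)}$, again below $\alpha$. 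The extremal instance for the first case is a symmetric $\pm1$ error on an exactly $\eps$-fraction of bags, for which $\Pr[S_t=0]=\Theta(1/\sqrt{\eps t})$, pinning down the rate and hence the dependence $t=\Theta(1/(\eps\alpha^2))$.

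For the sampling algorithm $\mc{A}_2$ I would avoid materializing all $\approx m^t$ unions and instead draw $s$ i.i.d.\ union bags, each a union of $t$ bags sampled uniformly from $\mc{B}$, giving $\ol{\mc{B}}_2$ with bags of size at most $kt$. For a \emph{fixed} classifier its empirical union-accuracy concentrates around the true value $\Pr[S_t=0]$ by a multiplicative Chernoff bound, and I would take a union bound over all distinct classifiers on the $n$ feature vectors --- at most $2^n$ of them, or at most $(en/r)^r$ dichotomies when the oracle's output has VC dimension $r$. A bad classifier has true union-accuracy at most $\alpha/2$, and the multiplicative Chernoff tail at threshold $\alpha$ with mean $\le\alpha/2$ is $e^{-\Omega(\alpha s)}$; requiring $2^n e^{-\Omega(\alpha s)}\le\delta$ (resp.\ $(en/r)^r e^{-\Omega(\alpha s)}\le\delta$) yields $s=O(\alpha^{-1}(n+\log(1/\delta)))$ (resp.\ $s=O(\alpha^{-1}r\log(n/r)+\log(1/\delta))$). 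With this $s$, with probability $\ge 1-\delta$ no bad classifier attains empirical union-accuracy $\ge\alpha$, so the oracle's output, which does, is again $(1-\eps)$-accurate on $\mc{B}$. The single power of $1/\alpha$ here, versus $1/\alpha^2$ in $t$, is exactly what the multiplicative tail buys at a constant multiplicative gap between the threshold $\alpha$ and the mean $\alpha/2$.

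The main obstacle is the uniform anti-concentration bound of the second paragraph: it must hold simultaneously for the worst-case integer error distribution induced by \emph{any} classifier, and the two regimes --- cancellation (mode near $0$, handled by Kolmogorov--Rogozin) and drift (nonzero mode, handled by Hoeffding) --- must be stitched together so as to recover the clean rate $1/\sqrt{\eps t}$ without a spurious dependence on the bag size $k$, which I would absorb into the hidden constants by treating the bags as small. Everything else --- the tuple count $m^{t+1}$, the size bound $kt$, and the Chernoff and union-bound step for $\mc{A}_2$ --- is routine once this estimate is in hand.
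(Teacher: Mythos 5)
Your overall architecture (contrapositive via anti-concentration of the summed error for $\mc{A}_1$, then per-classifier Chernoff plus a union bound over $2^n$ or $(en/r)^r$ dichotomies for $\mc{A}_2$) matches the paper's, but your core anti-concentration claim is false as stated, and this is a genuine gap. Your $\ol{\mc{B}}_1$ consists of unions of exactly $t$ i.i.d.\ bags, so the union-accuracy of $f$ is $\Pr[S_t=0]$ for an i.i.d.\ sum, and you must bound this uniformly over all integer error laws with $\Pr[e\neq 0]\ge\eps$ and support in $\{-k,\dots,k\}$. Consider the law with mass $1-\eta$ at $+1$ and mass $\eta$ at $-N$, where $N=\lceil (1-\eta)/\eta\rceil \le k$: here $q=1\ge\eps$, the mean is (essentially) zero, and $S_t=0$ exactly when the number of $-N$ draws is $t\eta$, which happens with probability $\Theta(1/\sqrt{t\eta})=\Theta(\sqrt{k/t}\,)$. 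This sits precisely in the crack between your two cases: the modal mass is $M=1-\eta>1-\eps$, so Kolmogorov--Rogozin only gives $C/\sqrt{t(1-M)}=C\sqrt{k/t}$ (tight here), while the mean is zero so there is no drift and Hoeffding gives nothing --- your inference ``nonzero mode $\Rightarrow$ mean bounded away from $0$'' needs $|v_0|(1-\eps)-k\eps>0$, i.e.\ roughly $\eps<1/(k+1)$, which is not available. Consequently forcing $\Pr[S_t=0]<\alpha$ requires $t=\Omega(k/\alpha^2)$, so your $t$ cannot be $O(1/(\eps\alpha^2))$ independently of $k$; equivalently, once $k\gtrsim 1/\eps$ there is a collection $\mc{B}$ and a classifier of accuracy $0$ on $\mc{B}$ whose union-accuracy exceeds $\alpha$, which the oracle may legally return. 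You flagged this yourself (``absorb $k$ into the hidden constants by treating the bags as small''), but $k$ is a parameter of the theorem, not a constant.

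The paper avoids this with one ingredient your construction is missing: after sampling the $t$ bags, each sampled bag is independently \emph{kept with probability $1/2$} and discarded otherwise (the $\star$ step in the distribution $\ol{D}$ of Fig.~\ref{algo:DistnDbar}), so each large bag is a random sub-union. Conditioned on the sampled bags, satisfaction of the union is the event $\sum_i a_iX_i=0$ with \emph{fixed} integer coefficients $a_i=h(B_i)-\sigma_i$ and i.i.d.\ Bernoulli$(1/2)$ variables $X_i$, and the Littlewood--Offord--Erd\H{o}s lemma bounds this by $C/\sqrt{|I|}$, where $I$ is the set of sampled error bags ($|a_i|\ge 1$), \emph{regardless of the magnitudes} of the $a_i$. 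A Chernoff bound gives $|I|\ge \eps t/2$ with probability $1-e^{-\eps t/8}$, yielding the $k$-independent bound $C/\sqrt{\eps t/2}+e^{-\eps t/8}$; in your counterexample the $1/2$-thinning anti-concentrates at rate $1/\sqrt{t}$ even though the plain i.i.d.\ sum does not. If you replace your all-$t$-tuples collection by the support of this thinned distribution (still at most $m^{t+1}$ bags of size at most $kt$), the rest of your argument --- including the $\mc{A}_2$ analysis with threshold $\alpha$ against mean $\le\alpha/2$ and the VC-based union bound --- goes through essentially verbatim and coincides with the paper's proof.
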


Theorem \ref{thm:weaktostrong} presents algorithms that, when applied to collections of bags in the LLP setting, yields high-accuracy classifiers by employing weak classifiers trained on a reasonably sized collections of large bags. This can in particular be achieved by an efficient randomized algorithm $\mc{A}_2$. We also conduct experiments (see Section \ref{sec:experiments}) -- on both real and synthetic datasets -- to demonstrate the effectiveness of $\mc{A}_2$. We use it to construct a limited collection of large bags from a given collection of small bags and experimentally show that a weak classifier on the large bags yields one with significantly higher accuracy on the constituent small bags.

\subsection{Overview of Techniques}\label{sec:overview}
{\bf Impossibility of Boosting in LLP} (Theorem \ref{thm:LLP-imposssibility}). Our construction follows from  the well-known \textit{semi-definite programming} (SDP) integrality gap of \citet{Feige2002} for the Max-Cut problem. In this, for some arbitrarily small $\eps > 0$, with $d$ depending on $\eps$, the vertices of the graph are given by points on the $(d-1)$-dimensional unit sphere $\mathbb{S}^{d-1}$. For any constant $\alpha \in [1/2, 1)$, each edge is between points that are at an angle of at least $\alpha\pi$. Using techniques related to spherical isoperimetry and concentration of measure in high dimensions, the authors prove that there is no cut in the graph separating more than $(\alpha + \eps)$-fraction of the edges. By creating a $2$-sized bag corresponding to each edge with latter's two end-points being the bag's two feature-vectors, we create a collection of bags, and for each one we assign an aggregate label $1$ i.e., any bag is satisfied if exactly one of its feature-vectors is labeled $1$ or equivalently the corresponding edge is separated. The cut upper bound of $(\alpha + \eps)$ thus directly gives us the upper bound on the best possible accuracy of any classifier. On the other hand, since the angle between the feature-vectors of any edge is at least $\alpha\pi$, a random halfspace passing through the origin -- given by ${\sf pos}\left(\br^{\sf T}\bx\right)$ for a random unit vector $\br$ -- has expected accuracy $\alpha$ for any weight assignment to the bags, and therefore there is some halfspace achieving accuracy $\alpha$.

\noindent
{\bf Impossibility of Boosting in MIL} (Theorem \ref{thm:MIL-imposssibility}). Since the aggregation function is ${\sf OR}$ the Max-Cut construction of \cite{Feige2002} is not applicable. Instead we hand-craft the set of bags as follows. The set of feature-vectors is all points on the unit circle $\mathbb{S}^1$  and for some $\alpha \in (1/2, 1)$, we create a bag with two points if the angle between them is exactly $\alpha \pi$ and give an aggregate label $1$ to all such two sized bags (let us call them $1$-bags). We also construct $2$-sized bags with aggregate label $0$ when the angle between two points is exactly $(1 - \alpha)\pi$ (called as $0$-bags). %
If we consider any reweighted collection of these bags then %
a simple threshold based case-analysis yields weak classifier of accuracy $2/3 - (1-\alpha)/2$. To rule out any strong classifier, we consider a labeling where $z$-fraction of the points in $\mathbb{S}^1$ are labeled as 1. We show that the maximum accuracy possible is 3/4 which is achieved at $z = 1/2$. We choose $\alpha = 1 - \eps$  while losing an additional error of $\eps/2$ in the weak-classifier accuracy due to discretization to obtain the desired bounds. 

\noindent
{\bf Weak to Strong LLP Learning} (Theorem \ref{thm:weaktostrong}). The main idea is, given a target collection of bags $\mc{B}$, to construct all possible bags which are unions of up to $t$ bags from $\mc{B}$. Note that the aggregate label for the union is simply the sum of the aggregate labels of the constituent bags, and the error of a classifier w.r.t. the aggregate label on the union of bags is the sum of errors on the constituent bags. Let $f$ be a classifier with accuracy $\gamma > 0$ on these larger bags, and assume for a contradiction that $f$ has accuracy less than $(1- \eps)$ on $\mc{B}$, for some $\eps > 0$. Call those bags in $\mc{B}$ on which $f$ has a non-zero error  $\in \mathbb{Z}\setminus\{0\}$ w.r.t. the aggregate label, as the \emph{error} bags. Now, if $t$ is large enough then a random set of $t$ bags from $\mc{B}$ has, with high probability $\approx \eps t$ error bags. Using a sampling argument we show that the error on the union of $t$ random bags from $\mc{B}$ is distributed like a random Bernoulli combination of the errors on $\approx 2\eps t$ bags. We then apply the Littlewood-Offord-Erd\H{o}s anti-concentration lemma to obtain that with probability at least $(1 - O(1/(\sqrt{\eps t}))$, the the union of the bags has non-zero error induced by $f$. By choosing $t$ large enough we obtain a contradiction with the accuracy of $\alpha$ on the large bags. Standard sampling techniques can be applied to obtain a more efficient procedure with high probability guarantees.

\section{Preliminaries} \label{sec:preliminaries}

\begin{lemma}[Chernoff Bounds] Let $X = \sum_{i=1}^{n} X_i$, where $X_i$ = 1 with probability $p_i$ and $X_i = 0$ with probability $1 - p_i$, and all $X_i$ are independent. Let $\mu = \mathbb{E}(X) = \sum_{i=1}^{n} p_i$. Then (i) Lower Tail: $\Pr[X \leq (1 - \eta)\mu] \leq e^{-\eta^2\mu/2} \ \forall \ 0 < \eta < 1$, and (ii) Upper Tail: $\Pr[X \leq (1 + \eta)\mu] \leq e^{-\eta^2\mu/(2 + \eta)} \ \forall \ 0 \leq \eta$.
\label{lemma:chernoff_bounds}
\end{lemma}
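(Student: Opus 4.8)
The plan is to prove both tails by the standard exponential-moment (Chernoff) method: bound each deviation probability by applying Markov's inequality to $e^{sX}$ for a suitable parameter $s$, use independence to factor the resulting moment generating function, bound each factor by an elementary inequality, and finally optimize over $s$. (I note in passing that the stated upper-tail event should read $\Pr[X \geq (1+\eta)\mu]$, which is the form I prove.)

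For the upper tail, for any $s > 0$ I would begin with Markov's inequality applied to the nonnegative variable $e^{sX}$,
\[
\Pr[X \geq (1+\eta)\mu] = \Pr\!\left[e^{sX} \geq e^{s(1+\eta)\mu}\right] \leq e^{-s(1+\eta)\mu}\,\E\!\left[e^{sX}\right].
\]
Since the $X_i$ are independent, $\E[e^{sX}] = \prod_{i=1}^n \E[e^{sX_i}] = \prod_{i=1}^n \left(1 + p_i(e^s - 1)\right)$. Applying $1 + x \leq e^x$ to each factor gives $\E[e^{sX}] \leq \exp\!\left(\mu(e^s - 1)\right)$, whence $\Pr[X \geq (1+\eta)\mu] \leq \exp\!\left(\mu(e^s - 1) - s(1+\eta)\mu\right)$. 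Choosing $s = \ln(1+\eta)$ to minimize the exponent yields
\[
\Pr[X \geq (1+\eta)\mu] \leq \left(\frac{e^\eta}{(1+\eta)^{1+\eta}}\right)^\mu.
\]
The lower tail is symmetric: for $s > 0$ I would instead apply Markov to $e^{-sX}$, obtaining by the same factorization and the $1+x \leq e^x$ bound that $\Pr[X \leq (1-\eta)\mu] \leq \exp\!\left(\mu(e^{-s} - 1) + s(1-\eta)\mu\right)$; optimizing at $s = -\ln(1-\eta)$ (valid for $0 < \eta < 1$) gives $\Pr[X \leq (1-\eta)\mu] \leq \left(e^{-\eta}/(1-\eta)^{1-\eta}\right)^\mu$.

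The remaining and most delicate step is converting these tight-but-awkward bounds into the clean stated forms, and this analytic part is where I expect the real work to lie (the probabilistic steps being routine once the exponential-moment method is set up). For the lower tail I would show $e^{-\eta}/(1-\eta)^{1-\eta} \leq e^{-\eta^2/2}$, equivalently $h(\eta) := -\eta - (1-\eta)\ln(1-\eta) + \eta^2/2 \leq 0$ on $(0,1)$. Here $h(0)=0$ and $h'(\eta) = \ln(1-\eta) + \eta$, which is $\leq 0$ since $\ln(1-\eta) \leq -\eta$; thus $h$ is nonincreasing and stays at or below $0$. For the upper tail I would establish $e^\eta/(1+\eta)^{1+\eta} \leq e^{-\eta^2/(2+\eta)}$, i.e. $g(\eta) := \eta - (1+\eta)\ln(1+\eta) + \eta^2/(2+\eta) \leq 0$ for $\eta \geq 0$. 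Again $g(0)=0$, and a direct computation gives $g'(\eta) = -\ln(1+\eta) + \eta(4+\eta)/(2+\eta)^2$, so it suffices to show $\eta(4+\eta)/(2+\eta)^2 \leq \ln(1+\eta)$; setting $\phi(\eta)$ equal to the difference of the two sides, one finds $\phi(0)=0$ and $\phi'(\eta) = 1/(1+\eta) - 8/(2+\eta)^3$, whose nonnegativity reduces to the polynomial inequality $(2+\eta)^3 \geq 8(1+\eta)$, i.e. $\eta(4 + 6\eta + \eta^2) \geq 0$, which holds for $\eta \geq 0$. This forces $g' \leq 0$ and hence $g \leq 0$, completing the bound.

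Putting the two optimized exponents together with these inequalities yields exactly $\Pr[X \leq (1-\eta)\mu] \leq e^{-\eta^2\mu/2}$ and $\Pr[X \geq (1+\eta)\mu] \leq e^{-\eta^2\mu/(2+\eta)}$, as claimed. The main obstacle is purely the transcendental-inequality verification in the last paragraph; everything else follows the textbook template.
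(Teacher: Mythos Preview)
Your proof is correct and follows the standard exponential-moment (Chernoff) template; the analytic inequalities you verify for the two tail simplifications are accurate. Note, however, that the paper does not supply its own proof of this lemma: it is stated in the preliminaries as a known result and used as a black box, so there is no paper proof to compare against. Your write-up is a faithful textbook derivation, and your parenthetical correction that the upper-tail event should read $\Pr[X \geq (1+\eta)\mu]$ is also right.
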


\begin{lemma}[Littlewood-Offord-Erd\H{o}s Lemma~\cite{littlewood_offord_Erds1945OnAL}] Let $X_1, X_2, \dots , X_n$ be \emph{i.i.d} $\{0, 1\}$-Bernoulli random variables with $\Pr[1] = 1/2$, and let $a_1, a_2, . . . , a_n \in \mathbb{R}$ s.t. $|a_i| \geq 1, \ \forall i \in[n]$. Then, there exists an absolute constant $C > 0$ such that
\begin{equation*}
	\underset{X_1, \dots, X_n}{\Pr} \left[ \left| \sum_{i \in [n]} a_i X_i + \theta \right| \leq 1 \right] \leq \frac{C}{\sqrt{n}}
\end{equation*}
for any constant $\theta$.
\label{lemma:littlewood_offord}
\end{lemma}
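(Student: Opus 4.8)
The plan is to reduce the $\{0,1\}$-valued statement to the classical symmetric ($\pm 1$) form of the Littlewood--Offord problem and then invoke a Sperner-type antichain argument. First I would substitute $\eps_i := 2X_i - 1$, so that each $\eps_i$ is a uniform $\{-1,+1\}$ (Rademacher) variable and $\sum_i a_i X_i = \tfrac12\sum_i a_i\eps_i + \tfrac12\sum_i a_i$. Hence the event $\bigl|\sum_i a_i X_i + \theta\bigr| \le 1$ is exactly the event $\bigl|\sum_i a_i \eps_i + \theta'\bigr| \le 2$ for the constant $\theta' := 2\theta + \sum_i a_i$, i.e.\ the event that $\sum_i a_i\eps_i$ lands in a fixed closed interval of length $4$. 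Since the $\eps_i$ are symmetric and independent, flipping the sign of $\eps_i$ whenever $a_i < 0$ is a measure-preserving bijection, so I may assume $a_i \ge 1$ for all $i$ without changing the probability.

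The combinatorial core is to bound, for a fixed open interval $I$ of length $2$, the number of sign vectors $\eps \in \{-1,+1\}^n$ with $\sum_i a_i\eps_i \in I$. Identifying $\eps$ with the set $S = \{i : \eps_i = +1\}$ and writing $f(S) := \sum_{i\in S} a_i$, we have $\sum_i a_i\eps_i = 2f(S) - \sum_i a_i$, so the condition $\sum_i a_i\eps_i \in I$ becomes $f(S) \in I'$ for an open interval $I'$ of length $1$. The key observation is that if $S \subsetneq T$ then $f(T) - f(S) = \sum_{i \in T\setminus S} a_i \ge 1$ because every $a_i \ge 1$; thus no two comparable sets can both satisfy $f(\cdot) \in I'$, and the collection of qualifying sets is an \emph{antichain} in the Boolean lattice $2^{[n]}$. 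By Sperner's theorem its size is at most $\binom{n}{\lfloor n/2\rfloor}$. Covering the length-$4$ interval from the first step by a constant number (three suffices) of open length-$2$ intervals and applying a union bound, the number of qualifying $\eps$ is $O\!\left(\binom{n}{\lfloor n/2\rfloor}\right)$.

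Finally I would convert this count into a probability by dividing by $2^n$ and applying Stirling's approximation, $\binom{n}{\lfloor n/2\rfloor} \le C' 2^n/\sqrt{n}$ for an absolute constant $C'$, yielding the claimed bound $C/\sqrt{n}$. I expect the main obstacle to be the antichain step: recognising that the affine reparametrisation $\eps \mapsto S$ turns the anti-concentration question into a question about comparable sets, and that the hypothesis $|a_i|\ge 1$ is exactly what forces incomparability (hence Sperner applicability). The remaining care is purely bookkeeping --- tracking the additive shift $\theta'$, handling the length-$4$ versus length-$2$ discrepancy via the constant-sized cover, and the open/closed endpoint issue, all of which only affect the absolute constant $C$.
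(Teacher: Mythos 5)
The paper offers no proof of this lemma --- it is stated as a cited preliminary from Erd\H{o}s (1945) --- so your proposal can only be compared against the classical argument in that reference, which it reproduces exactly and correctly: reduce to Rademacher signs via $\eps_i = 2X_i - 1$, flip signs to assume $a_i \ge 1$, identify qualifying sign vectors with subsets $S$ whose weight $f(S)$ lies in an open unit interval, note that comparable sets have weights differing by at least $1$ so the qualifying sets form an antichain, apply Sperner's theorem, cover the closed length-$4$ interval by three open length-$2$ intervals, and finish with Stirling's bound $\binom{n}{\lfloor n/2\rfloor} \le C' 2^n/\sqrt{n}$. All the delicate points (open versus closed intervals, the role of $|a_i|\ge 1$ in forcing incomparability, the constant-factor cover) are handled properly, so the proof is complete up to the value of the absolute constant $C$.
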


\begin{theorem}[ Theorem 3.7 from \cite{anthony_bartlett_1999neural}] For a $\{0,1\}$-valued class $\mc{H}$ of functions with VC-dimension $\tn{VC-dim}(\mc{H}) = v$, let $\Pi_{\mc{H}}(n)$ denote the maximum number of possible $\{0,1\}$-labelings to any set of $n$ points from the domain of $\mc{H}$. If $n \leq v$, $\Pi_H(n) \leq 2^n$ and for $n > v$, $(\frac{en}{v})^v$. Refer to Section 3.3 of \cite{anthony_bartlett_1999neural} for more details on VC Dimension.
\label{theorem:vcdim_growth_function}
\end{theorem}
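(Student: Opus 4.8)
The plan is to decompose the claimed growth-function bound into two standard ingredients and dispatch the easy regime first. For $n \le v$ there is nothing to prove: any fixed set of $n$ points admits at most $2^n$ distinct $\{0,1\}$-labelings, so $\Pi_{\mc{H}}(n) \le 2^n$ regardless of $\mc{H}$. For $n > v$ I would establish the bound $(en/v)^v$ by combining (a) the Sauer--Shelah inequality $\Pi_{\mc{H}}(n) \le \sum_{i=0}^{v}\binom{n}{i}$, which is purely combinatorial, with (b) the elementary estimate $\sum_{i=0}^{v}\binom{n}{i} \le (en/v)^v$ valid whenever $n \ge v$. The bulk of the work is in (a).

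For (a) I would fix an arbitrary set $S$ of $n$ points and let $\mc{F}$ be the family of distinct labelings that $\mc{H}$ induces on $S$, identified with a family of subsets of $S$; by definition $|\mc{F}| \le \Pi_{\mc{H}}(n)$ and $\tn{VC-dim}(\mc{F}) \le v$, so it suffices to bound $|\mc{F}|$ for any such family. I would prove the stronger (Pajor) statement that $|\mc{F}|$ is at most the number of subsets of $S$ \emph{shattered} by $\mc{F}$, arguing by induction on $n$ (base case $n=0$, where $\mc{F} \subseteq \{\emptyset\}$ and $\emptyset$ is shattered whenever $\mc{F}$ is nonempty). In the inductive step I fix a point $x \in S$, set $S' = S\setminus\{x\}$, and decompose $|\mc{F}| = |\mc{F}'| + |\mc{F}''|$, where $\mc{F}'$ is the projection of $\mc{F}$ onto $S'$ and $\mc{F}''$ consists of those $T \subseteq S'$ for which both $T$ and $T\cup\{x\}$ belong to $\mc{F}$. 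The key observation is that every set shattered by $\mc{F}'$ is shattered by $\mc{F}$ and avoids $x$, while for every set $T$ shattered by $\mc{F}''$ the set $T\cup\{x\}$ is shattered by $\mc{F}$ and contains $x$; these two injected families of shattered sets are disjoint, so the number of sets shattered by $\mc{F}$ is at least (number shattered by $\mc{F}'$) $+$ (number shattered by $\mc{F}''$). Applying the inductive hypothesis over the $(n-1)$-point set $S'$ bounds these below by $|\mc{F}'|$ and $|\mc{F}''|$, closing the induction. Since every shattered set has size at most $v$, their number is at most $\sum_{i=0}^{v}\binom{n}{i}$, giving (a).

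For (b), assuming $n \ge v$ so that $v/n \le 1$, I would multiply the sum by $(v/n)^v$ and use $(v/n)^v \le (v/n)^i$ for each $i \le v$ to obtain $(v/n)^v\sum_{i=0}^{v}\binom{n}{i} \le \sum_{i=0}^{n}\binom{n}{i}(v/n)^i = (1 + v/n)^n \le e^{v}$, the last inequality being $1+x \le e^x$. Rearranging gives $\sum_{i=0}^{v}\binom{n}{i} \le (en/v)^v$, which together with (a) yields the stated bound for $n > v$.

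The only delicate point is the combinatorial bookkeeping in the inductive step of (a): one must verify both that $|\mc{F}| = |\mc{F}'| + |\mc{F}''|$ (each labeling of $S'$ is the projection of one or two labelings in $\mc{F}$, and $\mc{F}''$ counts exactly the doubled ones) and that the two shattered-set injections land in disjoint parts of $\mc{F}$'s shattered family, which is precisely where the $x$-containment dichotomy is used. Everything else — the base case and the binomial manipulation in (b) — is routine.
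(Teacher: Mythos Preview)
Your proof is correct and follows the standard Pajor--Sauer--Shelah route; both the inductive bookkeeping in (a) and the binomial estimate in (b) are handled properly. However, the paper does not actually prove this statement: it is quoted verbatim as Theorem 3.7 of Anthony and Bartlett and used as a black box in Section~\ref{sec:A2}. So there is no ``paper's own proof'' to compare against --- you have supplied a full argument where the paper simply cites the literature.
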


\subsection{Boosting meta algorithm for aggregate label setting}
\label{sec:preliminaries_boosting}
Given a collection of bags and aggregate labels, a prototypical boosting algorithm (given in Figure \ref{algo:boosting}) in the aggregate label setting, involves repeating certain steps over some number of rounds: in each round the training data is reweighed, for which a weak classifier is computed. The final output is some function over the ensemble of computed weak classifiers.
\begin{figure}[!htb]
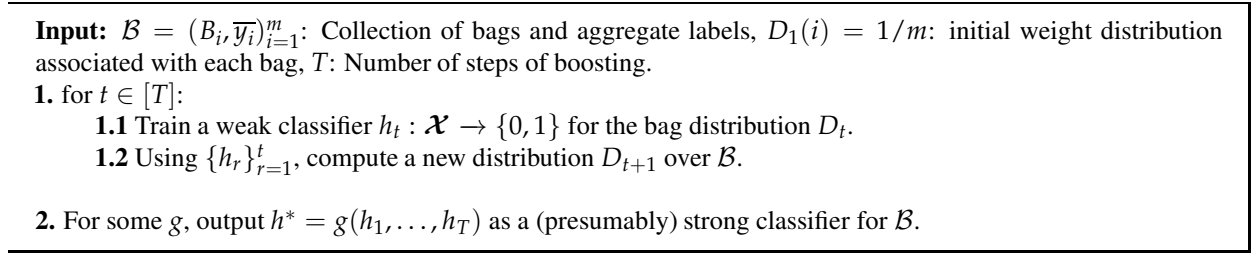

\begin{mdframed}
\small
\textbf{Input:} $\mc{B} = (B_i, \bar{y_i})_{i=1}^m$: Collection of bags and aggregate labels, $D_1(i) = 1/m$: initial weight distribution associated with each bag, $T$: Number of steps of boosting.
\\ \textbf{1.} for  $t \in [T]$: \\
 \hspace*{2em} \textbf{1.1} Train a weak classifier $h_t: \bm{\mc{X}} \xrightarrow[]{} \{0, 1\}$ for the bag distribution $D_t$. \\
\hspace*{2em} \textbf{1.2} Using $\{h_r\}_{r=1}^t$, compute a new distribution $D_{t+1}$ over $\mc{B}$. \\ %
\\ \textbf{2.} For some $g$, output $h^* = g(h_1, \dots, h_T)$ as a (presumably) strong classifier for $\mc{B}$.  
\end{mdframed}
\caption{Boosting for aggregate label setting}\label{algo:boosting}
\end{figure}

\section{Impossibility of Boosting in LLP} \label{sec:impossibility_llp}

The {\sf Max-Cut} problem is: given an undirected graph $G(V, E)$ find a cut given by the assignment $g : V \to \{0,1\}$ which separates the maximum number of edges in $E$ i.e., maximizes $\left|\{e = \{u, v\} \in E| g(u) \neq g(v)\}\right|$. We shall use the following construction of graph $G_{\tn{FS}}(V_{\tn{FS}}, E_{\tn{FS}})$ given in Sec. 3.1 of \cite{Feige2002}: 

{\bf Construction.} Let $\alpha\pi = \theta \in [\pi/2, \pi)$ and $\eps > 0$ be an arbitrarily small parameter such that $\theta + \eps\pi < \pi$. Let $d = O(1/\eps\log(1/\eps)$ and $\gamma = \eps^2/(2d)$. Divide the $(d-1)$-dimensional unit sphere $\mathbb{S}^{d-1}$ into $\left(\frac{O(1)}{\gamma}\right)^d$ equal sized cells of diameter at most $\gamma$ each (this is shown to be possible in Lemma 21 of \cite{Feige2002}). From each cell pick an arbitrary point $\bv$ and add it to $V_{\tn{FS}}$. Add an edge $\{\bu, \bv\}$ to $E_{\tn{FS}}$ for each pair of points $\bu, \bv \in V_{\tn{FS}}$ whose angle is between $\theta$ and $\theta + \eps$.

Section 3.1 of \cite{Feige2002} shows\footnote{While \cite{Feige2002} state the proof of \eqref{eqn:FSoptd} for a specific value of $\theta$, the proof applies to all values of $\theta \in [\pi/2,\pi)$.} that
\begin{equation}
    \Pr_{\{\bu, \bv\} \in E_{\tn{FS}}}[g(\bu) \neq g(\bv)] \leq \theta/\pi + O(\eps^2)  = \alpha + O(\eps^2) \label{eqn:FSoptd}
\end{equation}
for any $g : V_{\tn{FS}} \to \{0,1\}$. 

\subsection{Proof of Theorem \ref{thm:LLP-imposssibility}}
Let $G_{\tn{FS}}(V_{\tn{FS}}, E_{\tn{FS}})$ be the graph constructed above using $\theta = \alpha\pi \in [\pi/2, \pi)$ and let $\eps$ taken to be the same as that in the statement of Theorem \ref{thm:LLP-imposssibility}. Taking $V_{\tn{FS}}$ to be the underlying set feature-vectors, let the set of bags $\mc{B}$ be $E_{\tn{FS}}$ i.e., each edge $\{\by, \bv\}$ is a bag. All aggregate labels are $1$, so that any bag is satisfied by $g : V_{\tn{FS}} \to \{0,1\}$ iff the corresponding edge is separated by $g$. 

Now, for any bag $\{\bu, \bv\}$ in $\mc{B}$, from the construction of $G_{\tn{FS}}(V_{\tn{FS}}, E_{\tn{FS}})$, the angle between $\bu$ and $\bv$ is at least $\theta$. Thus, a random homogeneous halfspace (given by ${\sf pos}\left(\br^{\sf T}\bx\right)$ for $\br$ chosen uniformly at random from $\mathbb{S}^{d-1}$) satisfies the bag with probability at least $\theta/\pi = \alpha$. 

Thus for any assignment of weights $w_B$ for bags $B \in \mc{B}$, the expected weight of bags satisfied by a random  homogeneous halfspace is $\sum_{B}w_B\Pr_{\br \leftarrow \mathbb{S}^{d-1}}\left[B\tn{ is satisfied by } {\sf pos}\left(\br^{\sf T}\bx\right)\right] = \alpha\sum_Bw_B$ by linearity of expectation. Therefore, there is one classifier with weighted accuracy $\alpha$.

The upper bound on the accuracy of \emph{any} classifier on $\mc{B}$ follows directly from \eqref{eqn:FSoptd} and small enough $\eps > 0$.

\section{Impossibility of Boosting in MIL} \label{sec:impossibility_mil}

Along similar lines as the previous section, we provide a geometric construction of MIL on $2$-sized bags. We begin with a continuous set of points which we analyze and subsequently discretize while preserving its key properties. We fix a parameter $\alpha \in (1/2, 1)$.

\noindent
{\bf Construction.} Let $\bm{\mc{X}}_c$ be set of all points on the unit circle $\mathbb{S}^{1}$. For any two points that subtend an angle of exactly $\alpha \pi$ we create a $2$-sized bag with aggregate label $1$ (we call it a $1$-bag) containing those points. Similarly, bags with aggregate label $0$ (which we call $0$-bags) are formed by pairs of points at an angle of $(1-\alpha)\pi$. 
By mapping a $1$-bag to the mid-point of the smaller arc subtended by the two points in the bag (end-points), and noting that all the $1$-bags have unique mid-points, we obtain that the measure of the set of $1$-bags is same as that of $\mathbb{S}^1$. Similarly, this holds true for the set of $0$-bags. 
In particular, the set of $0$-bags and the set of $1$-bags are of equal measure. Let $\mc{B}_c$ be this infinite (continuous) collection of $1$-bags and $0$-bags. 

\noindent
{\bf Existence of Weak Classifier.}
Observe that the constant $0$ classifier given by ${\sf pos}(-1)$ will satisfy all $0$-bags and none of the $1$-bags.

Now, consider a random homogeneous halfspace given by ${\sf pos}(\br^{\sf T}\bx)$ for $\br$ uniformly sampled from $\mathbb{S}^1$. The two points of a $0$-bag will not be separated w.p. $\alpha$ and with a further $1/2$ both will be assigned $0$, implying that any $0$-bag will be satisfied with probability $\alpha/2$. On the other hand, both the points of a $1$-bag will be assigned $0$ w.p. $(1 - \alpha)/2$ implying that it will be satisfied w.p. $(1 + \alpha)/2$. 

Let there be any probability measure on $\mc{B}_c$ s.t. the measure of the $0$-bags is $p$ and that of the $1$-bags is $(1-p)$. If $p \geq 2/3$ then the constant $0$ classifier satisfies all the $0$-bags yielding an accuracy of $p \geq 2/3$. If not, then the random homogeneous halfspace satisfies in expectation
\begin{eqnarray}
    p\alpha/2 + (1-p)(1 + \alpha)/2 & = & (1 + \alpha)/2 - p/2 \nonumber \\ & \geq & 1/2 + \alpha/2 - 1/3 \nonumber \\ & = & 2/3 - (1 -\alpha)/2 \label{eqn:weak-multi}
\end{eqnarray}
Therefore, there is always a weak classifier, for any reweighing of the bags, of accuracy $2/3 - (1-\alpha)/2$.

\noindent
{\bf No Strong Classifier.}
Consider any $\{0,1\}$-labeling of $\mathbb{S}^1$.
Let $z \in [0, 1]$ represent the fraction of points on $G_c$ labeled as 1, with the remaining fraction $1-z$ labeled as 0. Sampling a $0$-bag u.a.r. and randomly choosing one of its points yields the uniform distribution over $\mathbb{S}^1$. Thus, the probability that a random $0$-bag  is satisfied is $\leq 1 - z$. 
The two points of all the $1$-bags cover $\mathbb{S}^1$ twice, so the probability that in a random $1$-bag at least one of its points is labeled $1$ is at most $\min\{2z, 1\}$.

Therefore, the probability that a random bag from $\mc{B}_c$ is satisfied by the labeling is at most
\begin{equation}
    \frac{1-z + \min\{2z,1\}}{2} = \begin{cases} 1 - z/2 & \tn{ if } z \geq 1/2 \\
                                                 1/2 + z/2 & \tn{ otherwise}
                                    \end{cases}
\end{equation}
which attains a maximum of $3/4$ at $z = 1/2$. Thus, no classifier can have accuracy $ > 3/4$ on $\mc{B}_c$

\noindent
{\bf Discretization.} Let $T$ be a large positive integer, and divide $\mathbb{S}^1$ into $2T$ continuous, non-overlapping arcs $\{A_i\}_{i=1}^{2T}$ of length $\delta \pi$ each, where $\delta = 1/T$. We choose $T$ large enough so that $2\delta < \min\{(2\alpha -1), (1 - \alpha)\}$, ensuring that: \\
(i) there is no segment that contains both endpoints of any bag in $\mc{B}_c$, and \\
(ii) for any pair of segments $A_i$ and $A_j$, if there is a $0$-bag in $\mc{B}_c$ with one point in $A_i$ and another in $A_j$, then there is no such $1$-bag, and similarly if there is a $1$-bag in $\mc{B}_c$ with one point in $A_i$ and another in $A_j$, then there is no such $0$-bag.

Using property (ii) above, let us construct a discrete set of bags $\mc{B}_d$ as follows. If a pair of segments $A_i$ and $A_j$ are such that there is a $0$-bag in $\mc{B}_c$ with one point in $A_i$ and another in $A_j$, then add $\{A_i, A_j\}$ as $0$-bag with weight as the measure of all the bags in $\mc{B}_c$ (which are necessarily $0$-bags) with one point in $A_i$ and another in $A_j$. Analogously, add pairs of segments as $1$-bags.
Note that from property (i), all bags in $\mc{B}_d$ have size $2$.

Let us first consider any $\{0,1\}$-labeling to $\{A_i\}_{i=1}^{2T}$. This directly corresponds to a $\{0,1\}$-labeling to $\mathbb{S}^1$ by assigning a point the label of the segment containing it. Further, from its construction, the weight of the bags $\mc{B}_c$ satisfied by the labeling to the segments equals the measure of the bags in $\mc{B}_c$ satisfied by the corresponding labeling to $\mathbb{S}^1$ which, as shown above, is at most $3/4$.

In particular, the above argument also shows that the measure of bags in $\mc{B}_d$ satisfied by the constant $0$ labeling to $\{A_i\}_{i=1}^{2T}$ is the same as that in $\mc{B}_c$ satisfied by the constant $0$ labeling to  $\mathbb{S}^1$.

Lastly, we translate the labeling by a homogeneous halfspace on  $\mathbb{S}^1$ to a labeling for $\{A_i\}_{i=1}^{2T}$ by assigning each $A_i$ the label of its mid-point. Consider the \emph{error} set of points in $\mathbb{S}^1$ whose label given by the homogeneous halfspace differs from the label of the segment containing it. For any homogeneous halfspace, the error set is entirely contained within the two diametrically opposite segments intersected by the halfspace. Similarly, the \emph{error} bags in $\mc{B}_c$ are those  whose aggregate label given by the homogeneous halfspace differs from the aggregate label of the corresponding bag in $\mc{B}_d$.

The \emph{error} bags in $\mc{B}_c$ are a subset of those which have at least one end-point in the the error set of points. Given any bag in $\mc{B}_c$ the probability over a random homogeneous halfspace that it is an error bag is at most the probability that one of its endpoints is in a segment intersected by the halfspace. By symmetry, a segment is intersected with probability $1/T$. So the probability that any bag in $\mc{B}_c$ is an error bag is at most $2/T = 2\delta$.

Thus, from \eqref{eqn:weak-multi} we obtain that for any weighing of the bags in $\mc{B}_d$, there is a classifier of accuracy $2/3 - (1-\alpha)/2 - 2\delta$.

\subsubsection{Completing the proof of Theorem \ref{thm:MIL-imposssibility}.} For this, we can take $\eps$ to be small enough, say $\eps \in (0, 0.1)$ and set $\alpha = 1 - \eps$ along with $T = \lceil 4/\eps\rceil$ so that $\delta \leq \eps/4$ and $2\delta < \min\{(2\alpha -1), (1 - \alpha)\}$ and $2/3 - (1-\alpha)/2 - 2\delta \geq 2/3 - \eps$.

\section{Weak to Strong Classification in LLP} \label{sec:weak_to_strong_alg}

Given $\alpha, \eps > 0$ we set $t$ to be $\frac{32}{\eps}\left(\frac{C_0}{\alpha}\right)^2$ where $C_0 > 0$ is an absolute constant to be decided. 
We begin by defining in Fig. \ref{algo:DistnDbar} a distribution $\ol{D}$ over bags $(\ol{B}, \ol{\sigma})$ where $\ol{B}$ is the union of at most $t$ bags from $\mc{B}$ and $\ol{\sigma}$ is the sum of their aggregate labels. 

\begin{figure}[!htb]
\begin{mdframed}
\small
\textbf{Input:} : Bags $\mc{B}$, $t$.\\
\textbf{Steps:}
\begin{enumerate}
    \item Independently for $i = 1, \dots, t$, let $\mc{P}_i = (B_i, \sigma_i)$ where $(B_i, \sigma_i)$ is sampled u.a.r. from $\mc{B}$.
    \item Independently for $i = 1, \dots, t$: set $\mc{Q}_i = \mc{P}_i$ w.p. $1/2$ and set $\mc{Q}_i = \star$ w.p. $1/2$.
    \item Output $(\ol{B}, \ol{\sigma})$ where
    \begin{equation}
    \displaystyle \ol{B} = \underset{\{i\,\mid\,\mc{Q}_i = (B_i, \sigma_i) \neq \star\}}{\bigcup} B_i, \tn{\ \ and\ \ } \ol{\sigma} = \underset{\{i\,\mid,\mc{Q}_i = (B_i, \sigma_i) \neq \star\}}{\sum}\sigma_i
    \end{equation}\label{eqn:olB}
\end{enumerate}
\end{mdframed}
\caption{Distribution $\ol{D}$.}\label{algo:DistnDbar}
\end{figure}

To aid our subsequent analysis we shall use the following straightforward lemma.
\begin{lemma}\label{lem:chernofappl}
   For any subset $\mc{S} \subseteq \mc{B}$ s.t. $|\mc{S}| \geq \kappa |\mc{B}|$, in Step 1. of Fig. \ref{algo:DistnDbar}, $\Pr\left[|\{i\,\mid\,(B_i, \sigma_i) \in \mc{S})\}| < \kappa t/2\right] \leq \tn{exp}(-\kappa t/8)$.
\end{lemma}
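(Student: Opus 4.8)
The plan is to recognize the count $\left|\{i \mid (B_i,\sigma_i) \in \mc{S}\}\right|$ as a sum of independent Bernoulli indicators and then invoke the Chernoff lower-tail bound (Lemma \ref{lemma:chernoff_bounds}(i)) directly.

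First I would set up the indicators. For each $i \in \{1,\dots,t\}$ define $X_i = 1$ if the bag $\mc{P}_i = (B_i,\sigma_i)$ drawn in Step 1 of Fig. \ref{algo:DistnDbar} lies in $\mc{S}$, and $X_i = 0$ otherwise. Since Step 1 samples each $\mc{P}_i$ independently and uniformly from $\mc{B}$, the $X_i$ are independent Bernoulli variables with common parameter $p := |\mc{S}|/|\mc{B}| \ge \kappa$. Writing $X = \sum_{i=1}^t X_i$, which is exactly the quantity $\left|\{i \mid (B_i,\sigma_i) \in \mc{S}\}\right|$ appearing in the statement, we have $\mu := \E[X] = tp \ge \kappa t$.

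Next I would calibrate the deviation parameter $\eta$ in the Chernoff bound so that its threshold $(1-\eta)\mu$ lands exactly on the target value $\kappa t/2$. Solving $(1-\eta)\mu = \kappa t/2$ yields $\eta = 1 - \kappa/(2p)$, and since $p \ge \kappa > 0$ this gives $\eta \in [1/2, 1)$, so the lower-tail hypothesis $0 < \eta < 1$ holds. Because $\{X < \kappa t/2\} \subseteq \{X \le (1-\eta)\mu\}$, Lemma \ref{lemma:chernoff_bounds}(i) gives $\Pr[X < \kappa t/2] \le e^{-\eta^2\mu/2}$. I would then bound the exponent from below using the two facts already in hand, namely $\eta \ge 1/2$ (hence $\eta^2 \ge 1/4$) and $\mu \ge \kappa t$, which together give $\eta^2\mu/2 \ge \kappa t/8$ and therefore $\Pr[X < \kappa t/2] \le e^{-\kappa t/8}$, as claimed.

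I do not expect a genuine obstacle here, as this is a routine Chernoff calculation; the only subtlety worth flagging is that the mean $\mu$ may strictly exceed $\kappa t$ when $|\mc{S}| > \kappa|\mc{B}|$, which is precisely why I anchor the threshold at $\kappa t/2$ through the explicit choice of $\eta$ rather than at $\mu/2$. An equivalent and perhaps cleaner narrative is to observe that raising $p$ only makes $X$ stochastically larger, so the worst case is $p = \kappa$, at which point the symmetric choice $\eta = 1/2$ immediately produces the bound $e^{-\kappa t/8}$.
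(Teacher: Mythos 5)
Your proof is correct and takes essentially the same approach as the paper: both reduce the count to a sum of independent Bernoulli indicators with mean $\mu \ge \kappa t$ and apply the Chernoff lower-tail bound of Lemma \ref{lemma:chernoff_bounds}. The only cosmetic difference is that the paper keeps $\eta = 1/2$ and uses the event inclusion $\{X < \kappa t/2\} \subseteq \{X < \mu/2\}$, whereas you calibrate $\eta = 1 - \kappa/(2p)$ so the threshold lands exactly on $\kappa t/2$; both routes yield $\tn{exp}(-\kappa t/8)$ by the same mechanism.
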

\begin{proof}
    Since each $(B_i, \sigma_i)$ independently belongs to $\mc{S}$ w.p. $\kappa$,  $\Pr[(B_i, \sigma_i) \in \mc{S}] \geq \kappa$ and therefore $\mu := \E\left[|\{i\,\mid\,(B_i, \sigma_i) \in \mc{S})\}| \right] \geq \kappa t$. Thus, $\Pr\left[|\{i\,\mid\,(B_i, \sigma_i) \in \mc{S})\}| < \kappa t/2\right] \leq \Pr\left[|\{i\,\mid\,(B_i, \sigma_i) \in \mc{S})\}| < \mu/2\right] \leq \tn{exp}(-\mu/8) \leq \tn{exp}(-\kappa t/8)$, where we use the Chernoff Tail Bound (Lemma \ref{lemma:chernoff_bounds}) using $\eta = 1/2$ and the lower bound of $\kappa t$ for $\mu$.
\end{proof}

\subsection{Analysis for a fixed classifier $h$}
We prove the following lemma.
\begin{lemma}\label{lem:errorampl}
    Let $h: \bm{\mc{X}} \to \{0,1\}$ be a classifier such that $h$ has accuracy $< (1- \zeta)$ on $\mc{B}$. Then, 
$$\Pr_{\ol{B}, \ol{\sigma}) \leftarrow \ol{D}}\left[\sum_{\bx in \ol{B}}h(\bx) = \ol{\sigma} \right] \leq C_0/\sqrt{\zeta t} + \tn{exp}(-\zeta t/8)$$
for some absolute constant $C_0 > 0$.
\end{lemma}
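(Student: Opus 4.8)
The plan is to reduce the event ``$\ol{B}$ is satisfied by $h$'' to the event that a signed Bernoulli sum vanishes, and then kill that event with anti-concentration. For each bag $(B_i,\sigma_i)\in\mc{B}$ I define its error under $h$ as $e_i := \sum_{\bx\in B_i}h(\bx) - \sigma_i$, so that $(B_i,\sigma_i)$ is satisfied iff $e_i = 0$. Since the instance labels are $\{0,1\}$-valued and the aggregate labels lie in $\Z^{\geq 0}$, each $e_i$ is an integer, and hence every \emph{error} bag (one with $e_i\neq 0$) has $|e_i|\geq 1$ — which is precisely the hypothesis needed for the Littlewood--Offord--Erd\H{o}s lemma. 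Because $\ol{\sigma}$ is the sum of the constituent aggregate labels and the total error of $\ol{B}$ is the sum of the errors of the kept bags, writing $X_i\in\{0,1\}$ for the indicator of $\mc{Q}_i\neq\star$ in Step 2 of Fig.~\ref{algo:DistnDbar} gives
\[
\sum_{\bx\in\ol{B}}h(\bx) - \ol{\sigma} \;=\; \sum_{i=1}^{t} X_i\, e_i ,
\]
so it suffices to bound $\Pr[\sum_{i=1}^t X_i e_i = 0]$.

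The first step uses only the Step-1 randomness. Since $h$ has accuracy $< (1-\zeta)$ on the unweighted collection $\mc{B}$, the set $\mc{S}$ of error bags satisfies $|\mc{S}| > \zeta|\mc{B}|$. Letting $N = |\{i : (B_i,\sigma_i)\in\mc{S}\}|$ be the number of error bags sampled, Lemma~\ref{lem:chernofappl} with $\kappa = \zeta$ gives $\Pr[N < \zeta t/2] \leq \exp(-\zeta t/8)$.

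The second step conditions on the Step-1 outcome (fixing all $e_i$ and hence $N$) and exploits the \emph{independent} Step-2 randomness. The non-error bags contribute $0$, so $\sum_i X_i e_i = \sum_{i:\,(B_i,\sigma_i)\in\mc{S}} X_i e_i$ is a weighted sum of $N$ i.i.d.\ $\mathrm{Bernoulli}(1/2)$ variables with coefficient magnitudes $\geq 1$. On the event $\{N\geq \zeta t/2\}$, Lemma~\ref{lemma:littlewood_offord} (with $\theta = 0$) yields
\[
\Pr\!\left[\textstyle\sum_{i=1}^t X_i e_i = 0 \;\middle|\; \text{Step 1}\right] \;\leq\; \Pr\!\left[\,\big|\textstyle\sum_i X_i e_i\big|\leq 1 \;\middle|\; \text{Step 1}\right] \;\leq\; \frac{C}{\sqrt{N}} \;\leq\; \frac{C\sqrt{2}}{\sqrt{\zeta t}},
\]
for the absolute constant $C$ of that lemma. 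Finally I would split on whether $N\geq \zeta t/2$: the bad event has probability at most $\exp(-\zeta t/8)$, and on its complement the conditional bound above applies, giving $\Pr[\sum_i X_i e_i = 0] \leq C\sqrt{2}/\sqrt{\zeta t} + \exp(-\zeta t/8)$, which is the claim with $C_0 := C\sqrt 2$.

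The one point that genuinely needs care is the additivity identity for the error. Because bags are drawn with replacement and may share feature-vectors, a naive set union would undercount repeated instances while $\ol{\sigma}$ still sums the $\sigma_i$; the identity $\sum_{\bx\in\ol{B}}h(\bx) - \ol{\sigma} = \sum_i X_i e_i$ is valid only when the union is read with multiplicity (equivalently, the sampled bags' instances are treated as distinct), which is exactly the regime in which $\ol{\sigma}$ is defined as the plain sum. Granting that bookkeeping, the only probabilistic content is the interplay between the Chernoff lower bound on $N$ and the anti-concentration bound, and both are direct applications of the stated lemmas.
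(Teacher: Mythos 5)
Your proof is correct and takes essentially the same route as the paper's: a Chernoff bound (Lemma \ref{lem:chernofappl} with $\kappa = \zeta$) to guarantee at least $\zeta t/2$ sampled error bags, then conditioning on the Step-1 outcome and applying the Littlewood--Offord--Erd\H{o}s lemma (Lemma \ref{lemma:littlewood_offord}) to the Step-2 Bernoulli randomness over the error-bag coefficients, yielding $C/\sqrt{\zeta t/2} + \tn{exp}(-\zeta t/8)$. Your explicit remark that $\ol{B}$ must be read with multiplicity (so that the error of the union equals the sum of the constituent errors even when sampled bags repeat or overlap) makes precise a bookkeeping point the paper leaves implicit; otherwise the arguments coincide.
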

\begin{proof}
    Let $\mc{B}_{\tn{err}}$ be the \emph{error} bags $(B, \sigma) \in \mc{B}$ on which $\sum_{\bx \in B} h(\bx) \neq \sigma$, so that $|\mc{B}_{\tn{err}}| \geq \zeta |\mc{B}|$. For convenience, we shall abuse the notation $h(B)$ to denote $\sum_{\bx \in B}h(\bx)$, and therefore, for an error bag $B$, $\left|h(B) - \sigma\right| \geq 1$. Depending on the choices in Step 1. of Fig. \ref{algo:DistnDbar}, define the set $I := \{i\,\mid\,(B_i, \sigma_i) \in \mc{B}_{\tn{err}})\}$ and let $E_0$ be the event that the following occurs: $\left\{ |I| \geq \zeta t/2\right\}$. Further, let $E_1$ be the event that the following occurs:
\begin{equation}
        h(\ol{B}) = \bar{\sigma} \Leftrightarrow \underset{\{i\,\mid,\mc{Q}_i = (B_i, \sigma_i) \neq \star\}}{\sum}\left(h(B_i) - \sigma_i\right) = 0 \label{eq:E1event}
\end{equation}
where $(\ol{B}, \ol{\sigma})$ is the output in Step 3. Now, 
\begin{eqnarray}
\Pr[E_1] = & \Pr[E_1 | E_0]\Pr[E_0] + \Pr[E_1 | \neg E_0]\Pr[\neg E_0] \nonumber \\ \leq & \Pr[E_1 | E_0] + \Pr[\neg E_0] \nonumber
\end{eqnarray}
Since $|\mc{B}_{\tn{err}}| \geq \zeta |\mc{B}|$, Lemma \ref{lem:chernofappl} yields that $\Pr[\neg E_0] \leq \tn{exp}(-\zeta t/8)$. On the other hand, fix the set $I$ and bags $\{(B_i, \sigma_i)\}_{i\in I}$ and let $a_i := h(B_i) - \sigma_i$ ($i = 1, \dots t$). Defining $\{X_i\,\mid\, i \in I\}$ to be i.i.d $\{0,1\}$-valued Bernoulli random variables which are $1$ w.p. $1/2$, we obtain that $\Pr[E_1] = \Pr[\sum_{i \in I}a_iX_i = 0] \leq C/\sqrt{|I|}$ by applying Lemma \ref{lemma:littlewood_offord}. Therefore, $\Pr[E_1 | E_0] \leq C/\sqrt{(\zeta /2)t}$  and using the above bounds, $\Pr[E_1] \leq  C/\sqrt{(\zeta/2) t} + \tn{exp}(-\zeta t/8)$. %
\end{proof}

\subsection{Deterministic algorithm $\mc{A}_1$}\label{sec:A_1}
\begin{figure}[!htb]
\begin{mdframed}
\small
\textbf{Input:} : Bags $\mc{B}$, $k = \max_{(B,\sigma) \in \mc{B}} |B|$, $\alpha > 0$, $t$, oracle $\mc{O}_{kt, \alpha}$.\\
\textbf{Steps:}
\begin{enumerate}
    \item Let ${\sf supp}(\ol{D})$ be the support of $\ol{\mc{D}}$ (Fig. \ref{algo:DistnDbar}), and for each $(\ol{B}, \ol{\sigma}) \in {\sf supp}(\ol{D})$ let its weight $w_{(\ol{B}, \ol{\sigma})}$ be its probability under $\ol{D}$. Let $\ol{\mc{B}}$ be ${\sf supp}(\ol{D})$ with weights  $w_{(\ol{B}, \ol{\sigma})}$. 
    \item Output the classifier $h^*$ given by  $\mc{O}_{kt,\alpha}(\ol{\mc{B}})$.
\end{enumerate}
\end{mdframed}
\caption{Algorithm $\mc{A}_1$.}\label{algo:A1}
\end{figure}
Figure \ref{algo:A1} describes algorithm $\mc{A}_1$ using\footnote{We include in Appendix \ref{sec:suppD} an explanation on computing the probabilities under $\ol{D}$.} the distribution $\ol{D}$ defined in Figure \ref{algo:DistnDbar}. Suppose for a contradiction that the output $h^*$ of $\mc{A}_1$ has accuracy $< (1- \eps)$ on $\mc{B}$. Then, from Lemma \ref{lem:errorampl} we obtain that the probability that $(\ol{B}, \ol{\sigma})$ sampled from $\mc{D}$ is satisfied by $h^*$ is at most $C_0/\sqrt{\eps t} + \tn{exp}(-\eps t/8)$ which -- upon plugging in the value of $t$ -- is at most $\alpha/2$ which contradicts the accuracy of $h^*$ on $\ol{\mc{B}}$.

We next describe a more efficient, albeit randomized, variant of the algorithm.

\subsection{Randomized algorithm $\mc{A}_2$} \label{sec:A2}
\begin{figure}[!htb]
\begin{mdframed}
\small
\textbf{Input:} : Bags $\mc{B}$, $k = \max_{(B,\sigma) \in \mc{B}} |B|$, $\alpha > 0$, $t$, oracle $\mc{O}_{kt, \alpha}$, $s \in \mathbb{Z}^+$.\\
\textbf{Steps:}
\begin{enumerate}
    \item Let $\hat{\mc{B}} = \{(\hat{B}_j,\hat{\sigma}_j)\}_{j=1}^s$ be $s$ i.i.d. samples from $\ol{D}$ (Fig. \ref{algo:DistnDbar}). 
    \item Output the classifier $\tilde{h}$ given by  $\mc{O}_{kt,\alpha}(\hat{\mc{B}})$.
\end{enumerate}
\end{mdframed}
\caption{Algorithm $\mc{A}_2$.}\label{algo:A2}
\end{figure}
Figure \ref{algo:A2} provides the algorithm $\mc{A}_2$.
Fix any $h$ that has accuracy $< (1- \eps)$ on $\mc{B}$. Then, by Lemma \ref{lem:errorampl}, and our setting of $t$ we obtain that $\Pr_{(\hat{B}, \hat{\sigma})\leftarrow \ol{D}}[(\hat{B}, \hat{\sigma})\tn{ satisfied by } h] \leq \alpha/2$. Therefore, in Step 1 of $\mc{A}_2$ it is easy to see by monotonicity that 
\begin{eqnarray}
  \Pr\left[\left|\{j \in [s]\,\mid\, (\hat{B}_j,\hat{\sigma}_j) \tn{ satisfied by h}\}\right| \geq \alpha s\right] \leq \P\left[\sum_{\ell = 1}^s X_\ell \geq \alpha s\right] \label{eqn:randomAlg-1}
\end{eqnarray}
where each $X_\ell$ ($\ell = 1, \dots, s$) is an independent $\{0,1\}$-valued Bernoulli random variable taking value $1$ with probability $\alpha /2$. Therefore, using Chernoff Upper Tail bound from Lemma \ref{lemma:chernoff_bounds} we can upper bound the LHS of \eqref{eqn:randomAlg-1} by $\tn{exp}(-\alpha s/6)$ which is the upper bound on the probability that $h$ has accuracy $\geq \alpha$ on  $\hat{\mc{B}}$.

Let $\mc{C}$ be the classifier class to which the output of  $\mc{O}_{kt, \alpha}$ is guaranteed to belong. With $n$ being the total number of distinct feature-vectors in the bags $\mc{B}$, $\Pi_{\mc{C}}(n)$ (as given in Theorem \ref{theorem:vcdim_growth_function}) is the number of possible $\{0,1\}$-assignments to $n$ points induced by classifiers in $\mc{C}$. Taking a union-bound over all of them, we obtain that with probability at most $\Pi_{\mc{C}}(n)\tn{exp}(-\alpha s/6)$ the output of $\mc{A}_2$ has accuracy at least $(1 - \eps)$ on $\mc{B}$. 

When $\mc{C}$ is unrestricted then $\Pi_{\mc{C}}(n) \leq 2^n$ and therefore $\Pi_{\mc{C}}(n)\tn{exp}(-\alpha s/6) \leq \delta$ is ensured by taking $s = O\left((n + \log(1/\delta))/\alpha\right)$. On the other hand if the VC dimension of $\mc{C}$ is at most $r$, then $\Pi_{\mc{C}}(n) \leq (en/r)^r$ (from Theorem \ref{theorem:vcdim_growth_function}) , and therefore taking $s = O\left(\frac{r}{\alpha}\log\left(\frac{n}{r}\right) + \log\left(\frac{1}{\delta}\right)\right)$ suffices.

\section{Experiments}\label{sec:experiments}

\begin{table*}[htb!]
\centering
\caption{Results on the Synthetic Datasets.}
\label{tab:table1-appendix}
\resizebox{\linewidth}{!}{
\footnotesize
\begin{tabular}{rrr|rrr|rrr}
\toprule
\multirow[c]{2}{*}{$q$} & \multirow[c]{2}{*}{$t\ $} & \multirow[c]{2}{*}{$s\ \ \ \ $} &  & Random Bags &  &   & Hard Bags &  \\
  &  &  & Large & Small& Test Instance &  Large & Small& Test Instance \\
\midrule
\multirow[c]{4}{*}{5} & \multirow[c]{2}{*}{10} & 5000 & $52.891 \pm 5.196$ & $85.357 \pm 3.085$ & $96.067 \pm 1.218$ & $32.629 \pm 3.439$ & $68.374 \pm 4.428$ & $91.120 \pm 1.978$ \\
 &  & 15000 & $72.295 \pm 5.275$ & $93.089 \pm 2.057$ & $97.840 \pm 0.829$ & $47.276 \pm 5.241$ & $81.802 \pm 3.789$ & $95.160 \pm 1.365$ \\
\cline{2-9}
 & \multirow[c]{2}{*}{50} & 5000 & $21.330 \pm 3.110$ & $85.513 \pm 3.434$ & $96.453 \pm 0.780$ & $12.789 \pm 2.192$ & $68.463 \pm 5.828$ & $91.427 \pm 1.785$ \\
 &  & 15000 & $32.890 \pm 5.032$ & $93.076 \pm 1.466$ & $97.867 \pm 0.626$ & $18.311 \pm 2.544$ & $82.562 \pm 3.637$ & $95.560 \pm 1.299$ \\
\cline{1-9} \cline{2-9}
\multirow[c]{4}{*}{15} & \multirow[c]{2}{*}{10} & 5000 & $21.792 \pm 3.189$ & $50.133 \pm 7.520$ & $93.037 \pm 1.674$ & $14.731 \pm 2.337$ & $31.600 \pm 5.138$ & $86.855 \pm 2.638$ \\
 &  & 15000 & $32.259 \pm 3.444$ & $68.733 \pm 4.334$ & $96.566 \pm 0.890$ & $17.115 \pm 1.501$ & $40.067 \pm 5.189$ & $89.939 \pm 1.921$ \\
\cline{2-9}
 & \multirow[c]{2}{*}{50} & 5000 & $8.674 \pm 1.537$ & $52.400 \pm 7.079$ & $93.778 \pm 2.060$ & $5.252 \pm 1.715$ & $34.000 \pm 5.438$ & $85.657 \pm 3.132$ \\
 &  & 15000 & $11.106 \pm 3.042$ & $67.467 \pm 4.389$ & $96.067 \pm 1.412$ & $6.409 \pm 1.457$ & $40.800 \pm 6.753$ & $91.677 \pm 2.336$ \\
\bottomrule
\end{tabular}
}
\end{table*}

\begin{table}[htb!]
\centering
\caption{Results on the Real Datasets.}
\label{tab:table2-appendix}
\resizebox{0.6\linewidth}{!}{
\footnotesize
\begin{tabular}{rrrrrr}%
\toprule
 $q$ & $t$ & $s$ & Large Bags & Small Bags & Test Instance \\
\midrule
\multicolumn{6}{c}{\textit{Heart}}\\
\multirow[c]{4}{*}{5} & \multirow[c]{2}{*}{10} & 2500 & $24.207 \pm 4.418$ & $55.407 \pm 8.419$ & $79.911 \pm 4.349$ \\
 &  & 10000 & $31.337 \pm 5.363$ & $65.333 \pm 8.516$ & $77.956 \pm 3.767$ \\
\cline{2-6}
 & \multirow[c]{2}{*}{50} & 2500 & $5.356 \pm 2.715$ & $47.407 \pm 8.172$ & $78.400 \pm 3.676$ \\
 &  & 10000 & $9.128 \pm 3.192$ & $59.556 \pm 8.021$ & $77.689 \pm 5.622$ \\
\cline{1-6} \cline{2-6}
\multirow[c]{4}{*}{15} & \multirow[c]{2}{*}{10} & 2500 & $12.950 \pm 7.030$ & $35.111 \pm 15.006$ & $71.378 \pm 7.870$ \\
 &  & 10000 & $20.539 \pm 8.041$ & $49.778 \pm 16.498$ & $69.156 \pm 7.089$ \\
\cline{2-6}
 & \multirow[c]{2}{*}{50} & 2500 & $0.803 \pm 1.521$ & $26.222 \pm 16.226$ & $73.867 \pm 5.829$ \\
 &  & 10000 & $1.946 \pm 2.143$ & $30.667 \pm 10.328$ & $72.178 \pm 6.852$ \\

\midrule

\multicolumn{6}{c}{\textit{Australian}}\\
\multirow[c]{4}{*}{5} & \multirow[c]{2}{*}{10} & 3500 & $24.956 \pm 3.709$ & $55.962 \pm 5.783$ & $84.275 \pm 2.626$ \\
 &  & 10000 & $29.774 \pm 2.600$ & $62.692 \pm 4.319$ & $84.039 \pm 1.999$ \\
\cline{2-6}
 & \multirow[c]{2}{*}{50} & 3500 & $5.454 \pm 4.127$ & $53.846 \pm 9.449$ & $82.039 \pm 3.015$ \\
 &  & 10000 & $9.303 \pm 2.806$ & $58.141 \pm 6.510$ & $82.431 \pm 2.837$ \\
\cline{1-6} \cline{2-6}
\multirow[c]{4}{*}{15} & \multirow[c]{2}{*}{10} & 3500 & $10.396 \pm 4.906$ & $28.190 \pm 8.072$ & $75.313 \pm 6.233$ \\
 &  & 10000 & $15.746 \pm 4.950$ & $37.524 \pm 10.792$ & $78.222 \pm 5.824$ \\
\cline{2-6}
 & \multirow[c]{2}{*}{50} & 3500 & $0.257 \pm 0.596$ & $24.190 \pm 7.233$ & $74.707 \pm 5.073$ \\
 &  & 10000 & $1.342 \pm 1.910$ & $30.095 \pm 8.215$ & $77.657 \pm 4.000$ \\

\midrule

\multicolumn{6}{c}{\textit{Adult}}\\
\multirow[c]{4}{*}{5} & \multirow[c]{2}{*}{10} & 10000 & $11.169 \pm 1.156$ & $41.418 \pm 2.684$ & $80.234 \pm 2.526$ \\
 &  & 80000 & $17.055 \pm 0.591$ & $47.873 \pm 0.716$ & $83.802 \pm 0.243$ \\
\cline{2-6}
 & \multirow[c]{2}{*}{50} & 10000 & $0.168 \pm 0.148$ & $34.396 \pm 2.668$ & $75.651 \pm 3.222$ \\
 &  & 80000 & $2.161 \pm 0.306$ & $46.835 \pm 1.060$ & $83.111 \pm 0.831$ \\
\cline{1-6} \cline{2-6}
\multirow[c]{4}{*}{15} & \multirow[c]{2}{*}{10} & 10000 & $1.515 \pm 0.853$ & $13.000 \pm 1.970$ & $76.005 \pm 3.249$ \\
 &  & 80000 & $5.801 \pm 0.760$ & $22.878 \pm 1.316$ & $83.461 \pm 0.822$ \\
\cline{2-6}
 & \multirow[c]{2}{*}{50} & 10000 & $0.001 \pm 0.003$ & $8.797 \pm 5.715$ & $75.077 \pm 2.638$ \\
 &  & 80000 & $0.044 \pm 0.036$ & $21.498 \pm 0.667$ & $82.185 \pm 0.908$ \\
\bottomrule
\end{tabular}
}
\end{table}

In our experiments, we generate a collection of small $q$-sized bags as training data using fully supervised datasets. We use a fixed value of $q \in \{5, 15\}$.

\noindent
\textbf{Synthetic Datasets.} In this case we experiment in the realizable setting for which we select a random linear classifier $f^*$ passing though the origin to provide $\{0,1\}$-labels to the feature-vectors. For a given bag-size $q \in \{5, 15\}$, we  generate two types of bag collections as follows:
\begin{enumerate}[leftmargin=*,noitemsep,nolistsep]
    \item \textit{Random}: In this case each $q$-sized bag is created by randomly sampling points uniformly from the unit sphere as its constituent feature vectors.
    \item \textit{Hard Bags}: For these bags we first randomly construct pairs of points on the unit-sphere which are either (i) very close but have different labels under $f^*$, or (ii) nearly antipodal but have the same label. Each bag consists of several such randomly constructed pairs and one random point (since $q$ is odd).
\end{enumerate}
In both the above cases, the aggregate label of a bag is the sum of the labels of its feature-vectors given by $f^*$.We also have a test-set of labeled feature-vectors whose distribution is given by sampling each u.a.r. from a random training bag.

\noindent
\textbf{Real Datasets.} We use the following supervised UCI datasets: \textit{Heart} (303 instances, \citep{misc_heart_disease_45}), \textit{Australian} (690 instances, \citep{misc_statlog_(australian_credit_approval)_143}) and \textit{Adult} (48842 instances, \citep{misc_adult_2}) which have previously been used by \cite{PNCR14} to evaluate LLP methods. The feature-vector labels are available and the bags are created by partitioning the training-set into 
-sized bags. The test-set is given by a random subset of 15\% of the dataset.

\noindent
\textbf{Applying Algorithm $\mc{A}_2$.} For each collection of training bags, and an appropriate choice of $t$ and $s$ (see Figure \ref{algo:A2})  we create a collection of $s$ large bags by sampling each iid from the distribution $\ol{D}$ given in Figure \ref{algo:DistnDbar}.

\noindent
\textbf{Model Training.} We train a linear model $g(\bx)$ with a sigmoid activation function on the large bags using bag-level mse loss between the aggregate label of a bag and its aggregate prediction. In particular, for a large bag $\ol{B}$ and aggregate label $\ol{\sigma}$ the contribution to the loss is $\left(\ol{\sigma} - \sum_{\bx \in \ol{B}}g(\bx) \right)^2$. and the total loss is the sum over the 
 large bags in collection. The optimization is done using a mini-batch training with 512 bags in each mini-batch. The learning rate is 1e-2 with SGD optimizer for all experiments, and the model is trained till it reaches convergence on the instance-level test set.

\noindent
\textbf{Results.} Tables \ref{tab:table1-appendix} and \ref{tab:table2-appendix} have the experimental results for the synthetic, Heart, Australian and Adult datasets respectively. For each setting of $q$, $t$ and $s$, we report the mean accuracy and standard deviation on the training set for both large bags and their constituent small bags, along with the accuracy on test instances, averaged over $15$ runs. The main takeaways from the experimental results are:
\begin{enumerate}[leftmargin=*,noitemsep,nolistsep]
\item In all experiments, even with low accuracy on large bags we obtain classifiers with high accuracy on the constituent small bags and even higher accuracy on the instance-level test set. For example, on synthetic random bags with $q=5, t=50$ and $s=5000$, an accuracy of just $21.3\%$ on large bags yields an accuracy of $85.5\%$ on small bags and $96.4\%$ on the test set. On the Adult dataset, with $q=15, t=50$ and $s=80000$, with accuracy of just $0.044\%$ on large bags, we obtain a classifier with accuracy of $21.5\%$ on small bags and $82.2\%$ on the test set.
\item For a given  $q$ and $t$, increasing the number of large bags $s$  improves performance across the board, consistent with our theoretical bounds.
\item The bag-level performance scores are noticeably lower on the hard bags case as compared to the random bags case, even though both are from the realizable setting.
\item Accuracy scores on large bags decrease with increasing $q$ or $t$. This is understandable since this results in increased size of larger bags, making them more difficult to satisfy.
\end{enumerate}
The above observations, especially points 1 and 2, demonstrate that Algorithm $\mc{A}_2$ does indeed provide a way to use weak classifiers on large bags to obtain strong classifiers on small bags, which in turn are strong classifiers at the instance-level. The scalability of our techniques is also validated by the experiments on the substantially sized Adult dataset. Each of these experiments on a standard GPU/CPU took less than 12 hrs, and most completed within an hour\footnote{The code for these experiments will be made public along with the final version of this paper.}. For each dataset, the small bags were fixed, and large bags were sampled for each repeated run of the experiment. For the synthetic, Heart, and Australian datasets, the model was trained for 160 epochs, while for the Adult dataset, it was trained for 60 epochs. Each experiment was run on a single NVIDIA A100 40GB GPU and 2x Intel Broadwell 22 cores 44 threads CPU.

\section{Conclusion} \label{sec:conclusion}
In conclusion, our study is the first to demonstrate the impossibility of boosting weak classifiers to a strong classifier in the LLP and MIL settings. For LLP our work rules out boosting using weak classifiers of any accuracy $ < 1$, while for MIL the possibility of boosting weak classifiers with accuracy $< 2/3$ is eliminated. Complementing these findings in the LLP context, we propose an algorithm that converts a weak classifier for large bags into a strong classifier for an input collection of small bags. The algorithm constructs unions of constantly many small bags to achieve error amplification. A more efficient sampling based version of the same provides high probability guarantees, which we also experimentally validate on three real and two synthetic datasets.

\noindent
{\bf Limitations.} While this work has advanced the understanding of weak to strong learning in aggregate label settings,  our analysis does not rule boosting for MIL using weak classifiers with accuracy in $[2/3, 1)$. A related question remains on how to effectively obtain a strong classifier in the MIL setting using weak classifiers on larger bags.

\bibliographystyle{plainnat}
\bibliography{references}
\appendix

\section{Trivial Accuracy in the LLP and MIL} \label{app:trivial_performance}

First consider the LLP bags $\mc{B}$ from Theorem \ref{thm:LLP-imposssibility}, each bag is of size $2$ with aggregate label $1$ i.e., it is satisfied if exactly one of its feature-vectors is labeled $1$. Now consider just one bag from $\mc{B}$. This bag is not satisfied by the constant $0$ or constant $1$ classifier. On the other hand the expected accuracy of random labeling is $1/2$, and therefore ${\sf Trv}_{\sf LLP}(\mc{B}) = 1/2$.

Next, let $\mc{B}$ be the MIL bags from Theorem \ref{thm:MIL-imposssibility}. These are bags of size $2$ each and some of them have aggregate label $0$ and some have aggregate label $1$. Consider just two bags one with aggregate label $0$ and the other with aggregate label $1$. Now, the constant $a$ labeling satisfies the bag with aggregate label $a$ and does not satisfy the bag with aggregate label $(1-a)$, for $a \in \{0,1\}$. On the other hand the random labeling satisfies the $0$ aggregate label bag with probability $1/4$ and the bag with aggregate label $1$ with probability $3/4$. Thus, the expected number of bags satisfied is the random labeling is $1$. Therefore,  ${\sf Trv}_{\sf MIL}(\mc{B}) = 1/2$.

\section{Weighted  bags to unweighted bags} 
\label{app:wtdtounwtd}
\begin{figure}[!htb]
\begin{mdframed}
\small
\textbf{Input:} : Bags $\mc{B}_w = (B_i, w_i)_{i=1}^m$, $T$.\\
\\
\textbf{Steps:}
\\ \textbf{1.} Normalize the weight with a factor $Z$ such that $\sum_{i=1}^m w_i = m$.
\\ \textbf{2.} Define $\mc{B}$ to be the unweighted collection of bags and initialize it to $\emptyset$.
\\ \textbf{3.} for  $i \in [m]$: \\
\hspace*{2em} \textbf{3.1} Define $n_i =  \lceil{w_i(T-1)}\rceil$.\\
\hspace*{2em} \textbf{3.2} Add $n_i$ copies of $B_i$ to $\mc{B}$.\\
\\
\textbf{Output:} Output $\mc{B}$.
\end{mdframed}
\caption{Weighted to unweighted collection of bags}\label{fig:wtdtounwtd}
\end{figure}
The algorithm to convert a weighted collection of bags to an unweighted collection is given in Fig. \ref{fig:wtdtounwtd}. First, observe that $|\mc{B}| = \sum_{i=1}^m \lceil{w_i(T-1)}\rceil \leq \sum_{i=1}^m (w_i(T-1)+1) \leq (T-1)m + m = Tm$, where we use $\sum_{i=1}^m w_i = m$. On the other hand, $|\mc{B}| =  \sum_{i=1}^m \lceil{w_i(T-1)}\rceil \geq (T-1)m$. 

Now, to see that the error in accuracy is at most $O(1/T)$, observe that for any subset $I \subseteq [m]$,  $\sum_{i\in I} {w_i(T-1)} \leq \sum_{i\in I} \lceil{w_i(T-1)}\rceil \leq \sum_{i\in I} {w_i(T-1)} + |I|$. Therefore, the normalized error in the weight corresponding to $I$ is at most $|I|/((T-1)m) \leq m/((T-1)m) \leq 1/(T-1) = O(1/T)$ for $T > 1$. 

\section{Probabilities for the support of $\ol{D}$}\label{sec:suppD}
In Step 2 of Figure \ref{algo:DistnDbar}, the for a fixed configuration $\{\mc{Q}_i\}_{i=1}^t$ with $r : |\{ i \in [t]\,\mid\, \mc{Q}_i = \star\}|$, its probability under $\ol{D}$ is $\frac{m^r}{m^t}\frac{1}{2^r}$, since the number of choices for the $\star$-coordinates is $m^r$, while the total number of choices is $m^t$. Further, with $(1/2)^t$ probability we have the specific choices of the $r$ coordinates with $\star$ in Step 2. Iterating over all possible configurations   $\{\mc{Q}_i\}_{i=1}^t$ and assigning theur probabilities to the resultant $(\ol{B},\ol{\sigma})$ in Step 3, yields the support of $\ol{D}$ along with their probabilities.

\end{document}